\newcommand{\goatemojititle}{%
  \raisebox{-0.2em}{\includegraphics[height=1em]{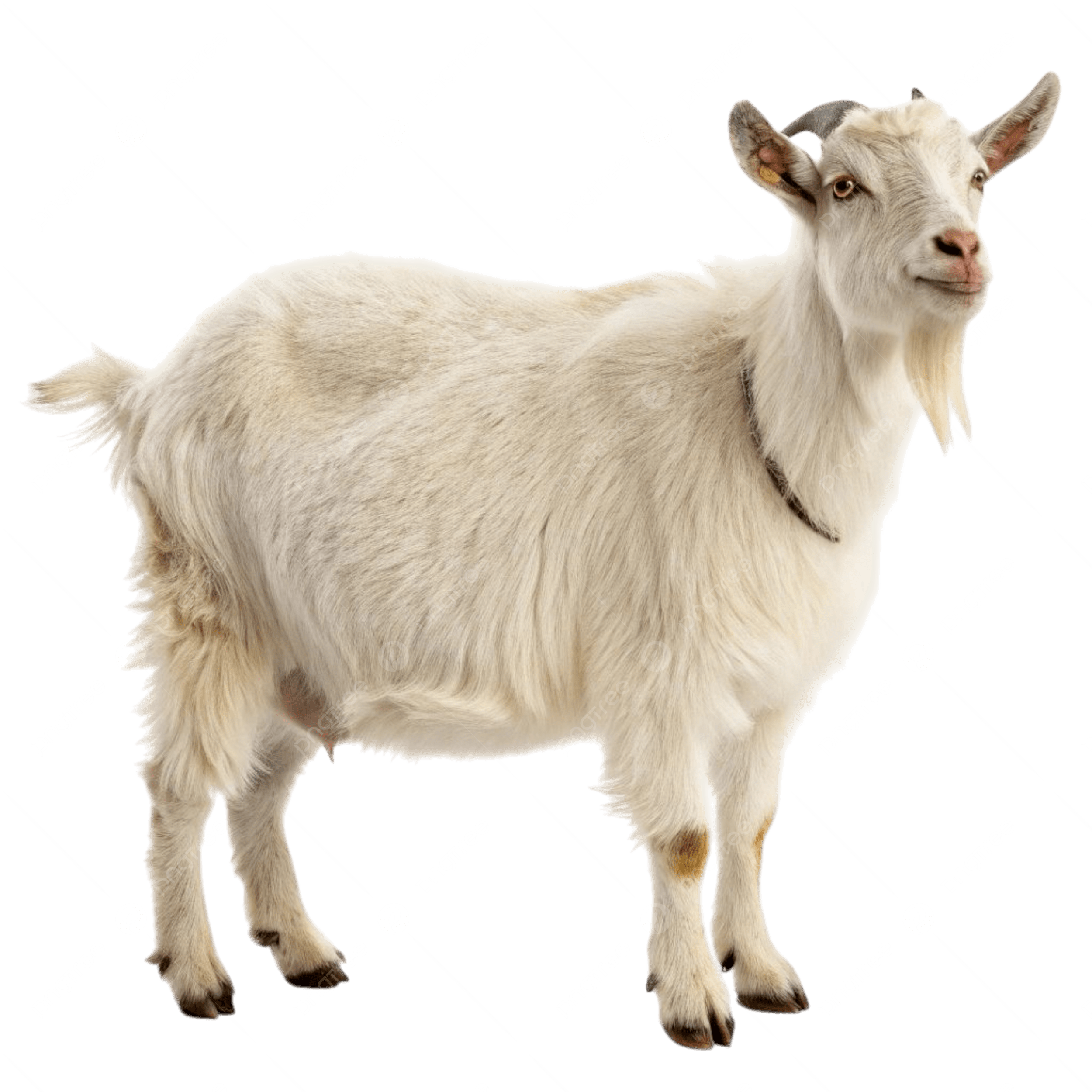}}%
}
\newcommand{\KL}{\mathrm{KL}}
\newcommand{\softmax}{\mathrm{softmax}}
\renewcommand{\Notice@String}{arXiv preprint.}
\theoremstyle{plain}
\newtheorem{thm}{Theorem}[section]
\newtheorem{prop}[thm]{Proposition}
\newtheorem{proposition}[thm]{Proposition}
\newtheorem{lem}[thm]{Lemma}
\newtheorem{assump}[thm]{Assumption}
\newtheorem{theorem}{Theorem} 
\theoremstyle{definition}
\newtheorem{defn}[thm]{Definition}
\newtheorem{definition}[thm]{Definition}
\theoremstyle{remark}
\icmltitlerunning{You Need Better Attention Priors}
\begin{document}
\definecolor{ghblue}{HTML}{24292E} 
\definecolor{linkblue}{HTML}{2980b9} 

\twocolumn[
\icmltitle{You Need Better Attention Priors}

\icmlsetsymbol{equal}{*}

\begin{icmlauthorlist}
\icmlauthor{Elon Litman}{stanford}
\icmlauthor{Gabe Guo}{stanford}
\end{icmlauthorlist}

\icmlaffiliation{stanford}{Department of Computer Science, Stanford University}

\icmlcorrespondingauthor{Elon Litman}{elonlit@stanford.edu}
\vskip 0.4in

\begin{center}
    \vspace*{-1.5em} 
    \large 
    
    \href{https://github.com/elonlit/goat}{%
        \color{ghblue}\faGithub\ \ 
        \color{linkblue}\ttfamily\bfseries github.com/elonlit/goat 
    }
\end{center}
\icmlkeywords{Deep Learning, Transformers, Attention Mechanisms, Optimal Transport, Entropic Optimal Transport, Positional Encoding, Attention Sinks, Long Context, Inductive Bias}

\vskip 0.3in
]



\printAffiliationsAndNotice{}  

\begin{abstract}
We generalize the attention mechanism by viewing it through the lens of Entropic Optimal Transport, revealing that standard attention corresponds to a transport problem regularized by an implicit uniform prior. We introduce Generalized Optimal transport Attention with Trainable priors (\textsc{Goat} \goatemojititle{}), a new attention mechanism that replaces this naive assumption with a learnable, continuous prior. This prior maintains full compatibility with optimized kernels such as FlashAttention. \textsc{Goat} also provides an EOT-based explanation of attention sinks and materializes a solution for them, avoiding the representational trade-offs of standard attention. Finally, by absorbing spatial information into the core attention computation, \textsc{Goat} learns an extrapolatable prior that combines the flexibility of learned positional embeddings with the length generalization of fixed encodings.

\end{abstract}

\begin{figure}[t]
  \centering
  \includegraphics[width=0.87\linewidth]{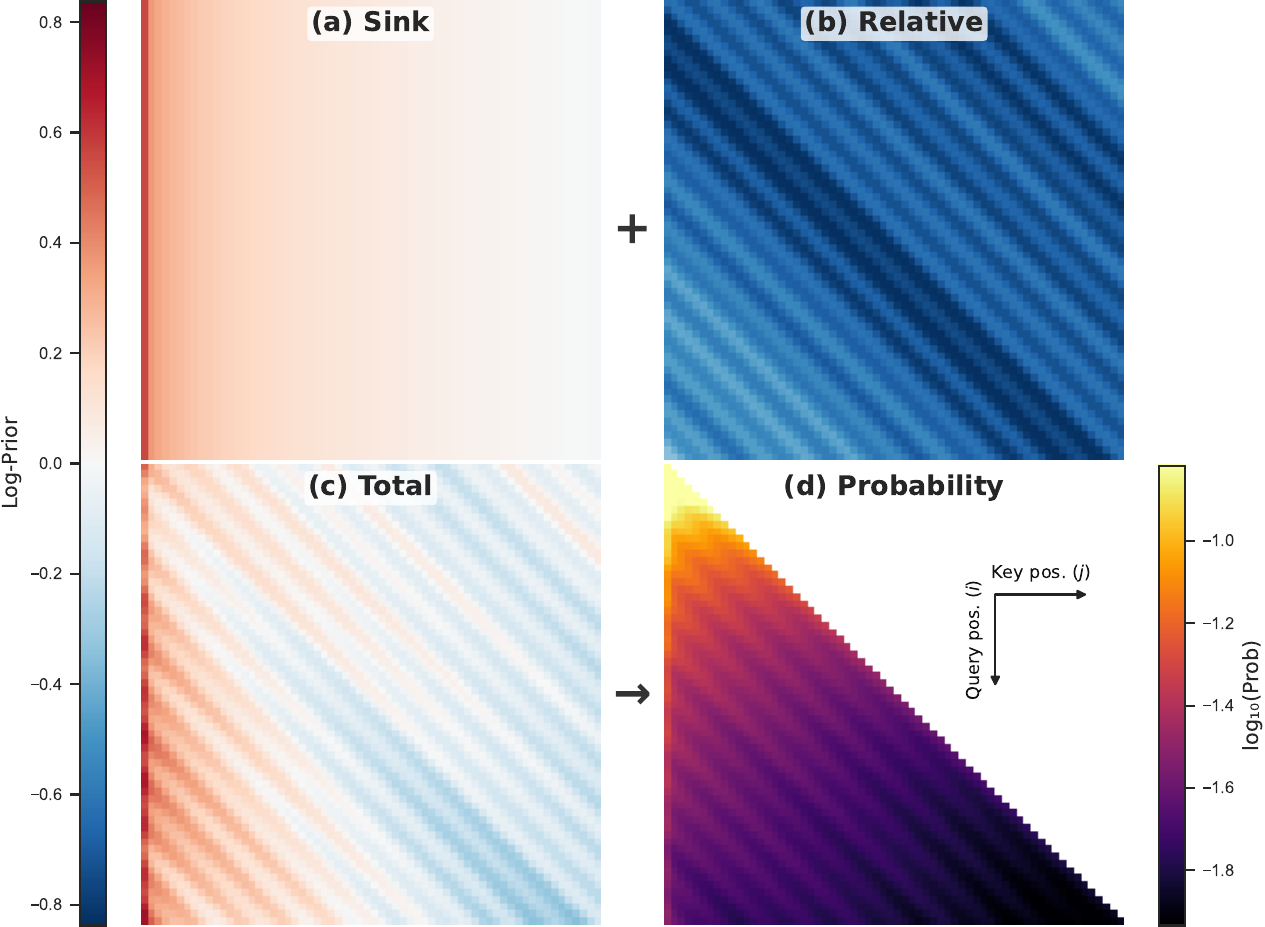}
  \caption{\textbf{Learned \textsc{Goat} log-prior decomposition on a toy copy-mixture task.}
  \emph{Task:} train a small causal LM on synthetic sequences where each token is a mixture of copying the first token (global), copying the previous token (local), or random noise.
  \textbf{(a)} Key-only sink component $K_{\mathrm{sink}}(i,j)=u(j)$.
  \textbf{(b)} Translation-equivariant relative component $K_{\mathrm{rel}}(i,j)=\kappa(i-j)$.
  \textbf{(c)} Row-centered total log-prior $K$ (row shifts do not affect softmax).
  \textbf{(d)} Induced causal prior probabilities after masking to $j \le i$ and row-normalizing with softmax (axes shown: query position $i$, key position $j$).
  }
  \label{fig:goat-prior-teaser}
\end{figure}

\section{Introduction}

Self-attention is the core computational primitive of the Transformer architecture \citep{vaswani2017attention, bahdanau2015neural, luong2015effective}, driving state-of-the-art performance in language modeling \citep{brown2020languagemodelsfewshotlearners}, vision \citep{dosovitskiy2021imageworth16x16words}, generative modeling with diffusion processes \citep{ho2020denoisingdiffusionprobabilisticmodels}, and sequential decision making \citep{chen2021decisiontransformerreinforcementlearning}. While empirically successful, scaled dot-product attention is still largely motivated by heuristics: the dot product serves as an intuitive similarity measure, and the softmax is treated as a smooth \texttt{argmax} surrogate. This has motivated a large body of work that refines the original formulation along three main axes: computational efficiency, expressivity, and behavioral stability. On the efficiency side, linear-time approximations \citep{katharopoulos2020transformers, choromanski2021rethinking}, sparse attention patterns \citep{kitaev2020reformer, beltagy2020longformer, zaheer2020bigbird}, and I/O-aware kernels such as FlashAttention \citep{dao2023flashattention2, shah2024flashattention3} have become standard practice. To improve expressivity, many methods inject inductive biases, for example, locality and relative positioning, directly into the attention scores via relative position encodings and attention biases \citep{shaw2018self, dai2019transformerxl, su2021roformer, press2022train}. More recently, studies of model stability have identified emergent behaviors such as attention sinks, where models allocate disproportionate mass to a small set of tokens in long-context regimes \citep{xiao2024efficientstreaminglanguagemodels, chen2024attention}. Our work revisits the mathematical foundations of attention to provide a unified perspective that addresses all three axes, yielding a generalized mechanism that is efficient, expressive, and offers direct control over such emergent behaviors.

Recent work reframes self-attention through the lens of Entropic Optimal Transport (EOT) \citep{litman2025scaled, cuturi2013sinkhorn, sinkhorn1967concerning}. In this view, the standard attention weights arise as the transport plan that distributes a Dirac delta impulse throughout the key space while being maximally entropic. We suggest an equivalent perspective: the tendency toward maximal entropy can be regarded as the tendency to remain as close as possible, in the Kullback–Leibler (KL) sense, to a uniform prior over positions. Interpreting attention as a KL-regularized transport problem suggests a broader family of mechanisms: replacing the implicit uniform prior with structured priors induces a richer, more controllable class of attention rules. In this light, we view standard PEs as heuristic approximations of a platonic EOT-derived prior, arguing that their lack of a unified derivation leads to structural entanglement and instability. We pursue this idea and make four core contributions.

First, we formalize this generalization and show that incorporating an arbitrary prior distribution $\bm{\pi}$ into the EOT objective yields a simple closed-form solution: for each query, the optimal attention distribution is a softmax over content scores shifted by the log-prior,
\begin{align}
\bm{p}^\star = \softmax\!\big(\bm{s}/\tau + \log \bm{\pi}\big),
\end{align}
where $\tau$ is a temperature parameter and $\bm{s}$ collects the scaled dot-product scores. Structurally, the log-prior additively contours the transport costs, providing a mechanism for encoding inductive biases directly into the objective rather than into the content representations.

Second, we introduce \textsc{Goat}, a highly efficient mechanism that realizes this structure within a single, unmodified scaled dot-product attention (SDPA) call. By factorizing query and key vectors into content and positional subspaces, \textsc{Goat} absorbs the log-prior into the dot-product logits while preserving head dimension, maintaining full compatibility with optimized kernels such as FlashAttention and its successors \citep{dao2023flashattention2} without materializing dense bias matrices. The module is initialized to either a uniform prior (recovering standard attention) or a maximum-entropy recency prior (approximating ALiBi), so that complex spatial biases are only learned when they improve upon this baseline.

Third, we demonstrate that \textsc{Goat} improves compute efficiency and outperforms baselines on long-context retrieval, computer vision, and computational biology tasks. Our method significantly reduces peak memory usage while maintaining robust generalization to sequence lengths far beyond those seen during training.

Finally, we leverage the KL-prior EOT formulation to give a formal account of \emph{attention sinks} \citep{xiao2024efficientstreaminglanguagemodels, gu2025attentionsinkemergeslanguage}, interpreting them as the natural outcome of the EOT objective on low-signal queries under a peaked prior. The \textsc{Goat} construction turns this phenomenon into a controllable modeling primitive: by factorizing keys into dedicated subspaces, we inject a key-only bias into the log-prior, yielding a disentangled way to control sink behavior without corrupting content representations.

\section{Preliminaries: Attention as Entropic Optimal Transport}
\label{sec:eot_derivation}

The core of our approach is the variational interpretation of the attention mechanism, which reinterprets the attention weights as the solution to an EOT problem \citep{litman2025scaled}. Consider a single query $i$ as a unit impulse of mass (a Dirac delta $\delta_i$) that must be distributed across the sequence of keys $\{j\}_{j=1}^L$. The cost of transporting mass from query $i$ to key $j$ is defined by their negative affinity, $c_{ij} = -s_{ij}$. The attention mechanism seeks a transport plan $\bm{p} \in \Delta^{L-1}$ that minimizes the expected transport cost while maintaining high entropy.

\begin{defn}[The EOT Objective]\label{def:eot_objective}
The attention weights $\bm{p}^\star$ are the unique minimizer of the transport cost regularized by Shannon entropy:
\begin{equation}
    \bm{p}^\star = \arg\min_{\bm{p} \in \Delta^{L-1}} \bigg\{ 
    \underbrace{\langle \bm{p}, -\bm{s} \rangle}_{\text{Transport Cost}} 
    - 
    \underbrace{\tau H(\bm{p})}_{\text{Regularization}} 
    \bigg\},
    \label{eq:eot_objective}
\end{equation}
where $\bm{s}$ is the vector of unscaled dot-product scores, $H(\bm{p}) \triangleq -\sum_j p_j \log p_j$ is the Shannon entropy \citep{shannon1948mathematical}, and $\tau > 0$ is the temperature.
\end{defn}

\begin{proposition}
The solution to \eqref{eq:eot_objective} recovers the standard softmax attention mechanism. We provide the full derivation in \autoref{app:attentioneot}.
\end{proposition}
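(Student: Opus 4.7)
The plan is to solve the constrained convex program in \eqref{eq:eot_objective} by forming a Lagrangian that enforces the simplex constraint $\sum_j p_j = 1$, deriving the stationarity condition for each coordinate, and then eliminating the multiplier via normalization. Because the objective is a strictly convex function on the open simplex (the content term is linear and $-\tau H$ is a positive combination of $x \log x$ terms, each strictly convex), any interior stationary point will automatically be the unique global minimizer, so first-order conditions suffice.

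First I would write the Lagrangian
\begin{equation}
\mathcal{L}(\bm{p}, \lambda) = -\sum_{j=1}^L p_j s_j + \tau \sum_{j=1}^L p_j \log p_j - \lambda\Bigl(\sum_{j=1}^L p_j - 1\Bigr),
\end{equation}
and note that the non-negativity constraints $p_j \ge 0$ need not be dualized separately because the entropy term acts as a log-barrier pushing the optimum into the relative interior of $\Delta^{L-1}$. Differentiating with respect to each $p_j$ yields the stationarity condition
\begin{equation}
-s_j + \tau(\log p_j + 1) - \lambda = 0,
\end{equation}
which I would then solve to give $p_j = \exp\bigl((s_j + \lambda - \tau)/\tau\bigr)$, i.e.\ $p_j \propto \exp(s_j/\tau)$.

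Next I would determine the multiplier by imposing $\sum_j p_j = 1$, which fixes the normalizing constant and produces
\begin{equation}
p_j^\star = \frac{\exp(s_j/\tau)}{\sum_{k=1}^L \exp(s_k/\tau)} = \softmax(\bm{s}/\tau)_j,
\end{equation}
recovering the standard scaled softmax attention (taking $\tau = \sqrt{d_k}$ reproduces the original formulation of \citet{vaswani2017attention}). I would close by invoking strict convexity of the objective together with compactness of $\Delta^{L-1}$ to conclude existence and uniqueness of $\bm{p}^\star$.

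I do not anticipate a hard step here; the main thing to be careful about is justifying why the implicit non-negativity constraints are inactive, which follows from the fact that the entropy penalty forces $p_j > 0$ at any finite minimizer (the objective diverges to $+\infty$ on the boundary only in one direction, but more importantly the stationary point computed above lies in the interior, and by strict convexity it is the unique global minimum). This same Lagrangian template will be reused in the generalized prior-augmented setting to obtain $\bm{p}^\star = \softmax(\bm{s}/\tau + \log \bm{\pi})$, so keeping the derivation modular here pays off later.
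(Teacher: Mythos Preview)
Your proposal is correct and follows essentially the same Lagrangian argument as the paper's own proof in \autoref{app:attentioneot}: form the Lagrangian with a single multiplier for the simplex constraint, take first-order conditions to obtain $p_j \propto \exp(s_j/\tau)$, and normalize. Your additional remarks on strict convexity and the inactivity of the non-negativity constraints are welcome elaborations that the paper leaves implicit, but the underlying route is identical.
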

This derivation reveals that standard attention is the unique distribution that is maximally non-committal (i.e., highest entropy) subject to matching the expected score. The Shannon entropy regularizer, $H(\bm{p})$, can be seen as penalizing the deviation from a uniform distribution. A natural generalization is to replace this with a regularizer that penalizes deviation from an arbitrary \textit{prior distribution} $\bm{\pi} \in \Delta^{L-1}$. The Kullback-Leibler (KL) divergence provides the measure for this \citep{kullback1951information}.

\section{Generalizing the Attention Prior}

The derivation in \cref{sec:eot_derivation} reveals a critical limitation of standard attention. The Shannon entropy regularizer, $H(\bm{p})$, can be equivalently viewed as the negative Kullback-Leibler divergence between $\bm{p}$ and a fixed uniform distribution $\mathcal{U}$:
\begin{equation}
    -H(\bm{p}) = \KL(\bm{p} \,||\, \mathcal{U}) - \log L.
\end{equation}
Consequently, standard attention assumes an uninformative, flat prior over the sequence. It penalizes any deviation from uniformity that is not justified by the content scores.

We generalize this framework by replacing the naive uniform assumption with an arbitrary prior distribution $\bm{\pi} \in \Delta^{L-1}$. This prior encodes structural expectations—such as locality, periodicity, or specific sinks—directly into the transport objective.

\begin{prop}[Attention with Priors]\label{prop:structural_priors}
Let $\bm{\pi}$ be a fixed prior distribution over keys. We generalize the regularization term from Shannon entropy to the Kullback-Leibler divergence $\KL(\bm{p} \,||\, \bm{\pi})$. The optimal transport plan is:
\begin{equation}
    \bm{p}^\star = \arg\min_{\bm{p} \in \Delta^{L-1}} \left\{ -\langle\bm{p}, \bm{s}\rangle + \tau \KL(\bm{p} \,||\, \bm{\pi}) \right\}.
    \label{eq:generalized_objective}
\end{equation}
The unique solution to this problem is a softmax over scores shifted by the log-prior:
\begin{equation}\label{eqn:10}
    p_j^\star = \softmax\left( \frac{s_j}{\tau} + \log \pi_j \right).
\end{equation}
This result formally identifies the missing term in the attention mechanism: standard PEs are merely heuristic approximations of this EOT-derived prior $\log \pi$.
\end{prop}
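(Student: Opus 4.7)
The plan is to treat \eqref{eq:generalized_objective} as a standard strictly convex program on the simplex and solve it directly by Lagrange multipliers. First I would expand the regularizer as $\KL(\bm{p}\,||\,\bm{\pi}) = \sum_j p_j \log(p_j/\pi_j)$, which makes the objective a sum of coordinate-separable convex terms plus the linear transport cost $-\langle \bm{p}, \bm{s}\rangle$. Because $x \mapsto x\log x$ blows up its derivative near $0$, the KL term acts as a barrier on the positive orthant, so the positivity constraints will be inactive and only the normalization constraint $\sum_j p_j = 1$ needs an explicit multiplier.

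Next I would form the Lagrangian $\mathcal{L}(\bm{p}, \lambda) = -\langle \bm{p}, \bm{s}\rangle + \tau \sum_j p_j \log(p_j/\pi_j) + \lambda\bigl(\sum_j p_j - 1\bigr)$ and take the partial derivative with respect to each $p_j$. The stationarity condition $-s_j + \tau(\log(p_j/\pi_j) + 1) + \lambda = 0$ rearranges to $\log p_j = \log \pi_j + s_j/\tau + c$ for a coordinate-independent constant $c$ absorbing $\lambda$ and $\tau$. Exponentiating yields $p_j^\star \propto \pi_j \exp(s_j/\tau)$, and enforcing $\sum_j p_j^\star = 1$ fixes the normalization, producing exactly $p_j^\star = \softmax(s_j/\tau + \log \pi_j)$ as claimed in \eqref{eqn:10}.

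For uniqueness I would invoke strict convexity: $\bm{p} \mapsto \KL(\bm{p}\,||\,\bm{\pi})$ is strictly convex on the relative interior of $\Delta^{L-1}$ (inherited from strict convexity of $x\log x$), and adding the linear transport term preserves strict convexity. Combined with compactness of the simplex, this guarantees a unique global minimizer, which must coincide with the stationary point above. The only subtlety I would flag is the boundary case $\pi_j = 0$: then $\KL$ forces $p_j = 0$ as well, and the argument is carried out on the reduced simplex $\Delta(\mathrm{supp}(\bm{\pi}))$, with the final softmax interpreted as supported on $\mathrm{supp}(\bm{\pi})$. For strictly positive priors, which is the regime of interest for \textsc{Goat} (since $\log \pi_j$ must be finite to serve as an additive logit bias), no such subtlety arises, and the derivation is essentially a one-line exercise in KKT that I expect to present in full in the appendix without significant obstacles.
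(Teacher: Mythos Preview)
Your proposal is correct, but the paper takes a slightly different route. Rather than forming a fresh Lagrangian for the KL-regularized objective, the paper expands $\KL(\bm{p}\,\|\,\bm{\pi}) = -H(\bm{p}) - \langle \bm{p}, \log\bm{\pi}\rangle$ and absorbs the cross-entropy term into the linear transport cost, obtaining an effective score $s_j + \tau\log\pi_j$. This reduces \eqref{eq:generalized_objective} exactly to the Shannon-regularized problem \eqref{eq:eot_objective} with a shifted cost vector, and the paper then simply invokes the already-proven softmax solution from Appendix~\ref{app:attentioneot}. Your direct KKT argument is equally valid and arguably more self-contained; the paper's reduction is shorter because it reuses prior work and makes the structural point that the log-prior acts purely as an additive contouring of the transport costs, which is the interpretation the rest of the paper leans on. Your treatment of the $\pi_j = 0$ boundary case and the explicit strict-convexity argument for uniqueness are nice additions that the paper leaves implicit.
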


\begin{proof}
We expand the KL divergence term in the objective function $J(\bm{p})$:
\begin{align}
    \KL(\bm{p} \,||\, \bm{\pi}) &= \sum_{j} p_j \log \frac{p_j}{\pi_j} \\ &= \sum_{j} p_j \log p_j - \sum_{j} p_j \log \pi_j.
\end{align}
Substituting this back into \autoref{eq:generalized_objective} and grouping terms linear in $p_j$:
\begin{align}
    J(\bm{p}) &= -\sum_{j} p_j s_j \nonumber \\ &+ \tau \sum_{j} p_j \log p_j - \tau \sum_{j} p_j \log \pi_j \nonumber \\
              &= \sum_{j} p_j \underbrace{\left( -s_j - \tau \log \pi_j \right)}_{\text{Effective Cost}} + \tau \underbrace{\sum_{j} p_j \log p_j}_{-H(\bm{p})}.
\end{align}
This transforms the KL-regularized problem into a standard Shannon-regularized problem (\autoref{eq:eot_objective}) with a modified cost vector. The effective score for token $j$ becomes $s_j + \tau \log \pi_j$. The optimal probability is proportional to the exponential of the effective score:
\begin{align}
    p_j^\star &\propto \exp \bigg( \frac{s_j + \tau \log \pi_j}{\tau} \bigg) = \exp \bigg(\frac{s_j}{\tau} + \log \pi_j \bigg)
              \\ &= \exp\left( \frac{s_j}{\tau} \right) \exp( \log \pi_j )
              = \pi_j \exp( s_j / \tau ).
\end{align}
Normalizing this result according to \autoref{eqn:7} yields the solution in \autoref{eqn:10}.
\end{proof}

This perspective reveals standard attention as the special case of a uniform prior. While the theory permits arbitrary distributions, we prove in \autoref{sec:prior_justification} that imposing practical constraints such as SDPA-compatibility, translation equivariance, and stability uniquely restricts the admissible priors to a finite trigonometric family. The remainder of this paper is dedicated to developing an expressive and computationally efficient method to learn this prior.

\section{Parameterizing the Prior With \textsc{Goat}}
\label{sec:goat_parameterization}

We have established that the standard attention mechanism is the unique solution to an EOT problem regularized by a uniform prior. By generalizing the regularizer to the Kullback-Leibler divergence against an arbitrary prior $\bm{\pi}$, we derived the optimal attention distribution. Identifying the temperature $\tau$ with the standard scaling factor $\sqrt{d_c}$ yields:
\begin{align}
p_{ij} = \text{softmax}_j \bigg( \frac{\langle \bm{q}_{c,i}, \bm{k}_{c,j} \rangle}{\sqrt{d_c}} + \mathcal{K}_{ij} \bigg),
\end{align}
where $s_{ij} = \langle \bm{q}_{c,i}, \bm{k}_{c,j} \rangle / \sqrt{d_c}$ represents the content-based affinity and $\mathcal{K}_{ij}$ is an unnormalized log-prior (the softmax normalizes $\exp(\mathcal{K})$ into a proper prior).

The structural inductive biases of the attention mechanism are determined entirely by the parameterization of $\mathcal{K}_{ij}$. We introduce Generalized Optimal transport Attention with Trainable priors (\textsc{Goat}). \textsc{Goat} parameterizes $\mathcal{K}_{ij}$ as a continuous, differentiable function of token positions that satisfies three criteria: it must express shift-invariant (relative) relationships including directionality, it must support global defaults (attention sinks), and it must be computationally realizable within standard attention kernels (e.g., FlashAttention) without materializing $L \times L$ bias matrices.

\paragraph{Spectral Decomposition of Relative Position}

We first consider the shift-invariant component of the prior. This inductive bias ensures that attention patterns depend solely on relative distance, allowing learned structures to generalize to sequence positions unseen during training. We parameterize this log-prior using a truncated Fourier series \citep{rahimi2007randomfeatures}. While motivated by the spectral representation of shift-invariant kernels (see \autoref{app:bochner}), our formulation operates on unnormalized logits. This allows the spectral weights $\alpha_r, \beta_r$ to take negative values, enabling the model to learn not just local periodicity (attraction) but also explicit suppression patterns (repulsion) at specific frequencies. To capture both symmetric and asymmetric relationships, we define:
\begin{equation}
   \mathcal{K}^{\text{rel}}_{ij} = \sum_{r=1}^R \bigg[ \alpha_r \cos(\omega_r (i - j)) + \beta_r \sin(\omega_r (i - j)) \bigg].
   \label{eq:spectral_sum}
\end{equation}
where $\{\omega_r\}_{r=1}^R$ are fixed geometric frequencies. The coefficients $\alpha_r$ and $\beta_r$ are learnable spectral weights, where $\alpha_r$ controls symmetric interactions, while $\beta_r$ controls antisymmetric interactions.

To incorporate this prior efficiently, we must linearize Equation \eqref{eq:spectral_sum} into an inner product of query and key vectors. We apply the angle difference identities:
\begin{align}
    \cos(\omega_r (i - j)) &= \cos(\omega_r i)\cos(\omega_r j) \\&+ \sin(\omega_r i)\sin(\omega_r j), \\
    \sin(\omega_r (i - j)) &= \sin(\omega_r i)\cos(\omega_r j) \\ &- \cos(\omega_r i)\sin(\omega_r j).
\end{align}
Substituting these into \autoref{eq:spectral_sum} allows us to factorize the expression. We define a positional subspace of dimension $d_r = 2R$. For the $r$-th frequency, we define the positional key vector $\bm{k}^{(r)}_{\text{rel}, j} \in \mathbb{R}^2$ simply as the Fourier feature of position $j$:
\begin{equation}
    \bm{k}^{(r)}_{\text{rel}, j} = \begin{bmatrix} \cos(\omega_r j) & \sin(\omega_r j) \end{bmatrix}^\intercal.
\end{equation}
The corresponding query vector $\bm{q}^{(r)}_{\text{rel}, i} \in \mathbb{R}^2$ is constructed by a spectral rotation parameterized by $\alpha_r$ and $\beta_r$:
\begin{equation}
    \bm{q}^{(r)}_{\text{rel}, i} = \begin{bmatrix} \alpha_r \cos(\omega_r i) + \beta_r \sin(\omega_r i) \\ \alpha_r \sin(\omega_r i) - \beta_r \cos(\omega_r i) \end{bmatrix}.
\end{equation}
We provide the full derivation verifying that this dot product recovers the desired spectral term in \autoref{app:spectral_factorization}. By concatenating these components across all $R$ frequencies, we obtain vectors $\bm{q}_{\text{rel}, i}$ and $\bm{k}_{\text{rel}, j}$ whose inner product reconstructs $\mathcal{K}^{\text{rel}}_{ij}$.

\paragraph{Explicit sink parameterization.}

A pervasive phenomenon in Transformer models is the emergence of \emph{attention sinks}: the tendency to allocate substantial probability mass to specific tokens, often the initial token, regardless of the query, particularly when no other token is semantically relevant \citep{xiao2024efficientstreaminglanguagemodels}. Standard attention typically induces sinks by learning high-norm \emph{content} keys, entangling a structural default with semantic representation. We instead introduce a dedicated key-only logit bias $u(j)$ in the prior, as mandated by our EOT analysis (\cref{sec:eot_sinks}). We parameterize $u(j)$ as a learnable linear decay plus an MLP over sinusoidal and length-normalized scalar inputs, ensuring robust length extrapolation. Implementation uses one additional dimension in the SDPA dot product:
\begin{align}
\langle \bm{q}_{\text{sink},i}, \bm{k}_{\text{sink},j}\rangle = \langle 1, u(j) \rangle = u(j) \qquad \forall i,
\end{align}
\textit{i.e.}, a broadcast bias on key $j$. This yields an explicit query-independent default (e.g., $j=0$) that dominates in low-signal regimes without corrupting content representations.

\paragraph{Unified \textsc{Goat} Parameterization}

The full log-prior is the sum of the relative and absolute components: $\mathcal{K}_{ij} = \mathcal{K}^{\text{rel}}_{ij} + u(j)$. We realize this sum within a single attention operation by constructing composite vectors. Let $d_h$ be the total head dimension. We reserve a subspace of size $d_p = 2R + 2$ (padded for alignment) for the prior and use the remaining $d_c = d_h - d_p$ dimensions for content. This separation permits the query and key projection matrices to be constructed as block-diagonal, ensuring strict independence between semantic and structural subspaces.

Standard attention implementations (e.g., PyTorch, FlashAttention) compute the kernel $\softmax(\bm{q}^\top \bm{k} / \sqrt{d_h})$. To inject our additive prior term $\mathcal{K}_{ij}$ without scaling it by $1/\sqrt{d_h}$, we must pre-scale the input vectors. We construct the composite query $\bm{q}'_i$ and key $\bm{k}'_j$ as follows:
\begin{align}
    \bm{q}'_i &= \begin{bmatrix} \bm{q}_{c,i} \sqrt{\frac{d_h}{d_c}} & \quad \bm{q}_{\text{rel}, i} \sqrt{d_h} & \quad \sqrt{d_h} \end{bmatrix}^\intercal, \\
    \bm{k}'_j &= \begin{bmatrix} \bm{k}_{c,j} & \quad \bm{k}_{\text{rel}, j} & \quad u(j) \end{bmatrix}^\intercal.
\end{align}
When the standard dot-product attention kernel is applied to these vectors, the result is:
\begin{align}
    \frac{\langle \bm{q}'_i, \bm{k}'_j \rangle}{\sqrt{d_h}} &= \frac{1}{\sqrt{d_h}} \left( \sqrt{\frac{d_h}{d_c}} \langle \bm{q}_{c,i}, \bm{k}_{c,j} \rangle + \sqrt{d_h} \mathcal{K}_{ij} \right) \\
    &= \frac{\langle \bm{q}_{c,i}, \bm{k}_{c,j} \rangle}{\sqrt{d_c}} + \mathcal{K}_{ij} \\
    &= s_{ij} + \log \pi_{ij}. \label{eqn:29}
\end{align}
Note that the content scores are scaled by $1/\sqrt{d_c}$ while the prior term $\mathcal{K}_{ij}$ enters unscaled (effective temperature $1$). This is a deliberate design choice: it prevents the prior from being attenuated at high head dimensions and ensures stable structural biases.

\paragraph{The Necessity of Disentanglement (Prior vs. PE).}
Our EOT formulation establishes that the attention mechanism is incomplete without an explicit prior $\boldsymbol{\pi}$, which manifests as an additive term in the logits. While this is isomorphic to an additive positional encoding, the EOT derivation mandates a specific parameterization that avoids the pitfalls of heuristic encodings. Standard methods like RoPE inject position multiplicatively via rotation, yielding $z_{ij} = (R_i \boldsymbol{q})^\top (R_j \boldsymbol{k})$. This creates \emph{structural entanglement}: the magnitude of the positional bias is coupled to the semantic norms $\|\boldsymbol{q}\|\|\boldsymbol{k}\|$. To enforce a structural default (e.g. an attention sink), such models must artificially inflate content vectors, corrupting the semantic representation. By strictly adding the log-prior, \textsc{Goat} ensures \emph{disentanglement}, allowing the model to optimally allocate mass to sinks in low-signal regimes without corrupting the content. Furthermore, this perspective frames existing PEs as limited heuristic approximations of the true prior. While biases like ALiBi correctly adopt additivity, they enforce a rigid, monotonic structure. \textsc{Goat} replaces these heuristics with a learnable prior derived from the EOT objective. This yields a mechanism that is not only disentangled but fully expressive, capable of discovering complex structural dependencies, modeling stable defaults, while retaining the extrapolation robustness of fixed encodings.

This formulation allows \textsc{Goat} to be implemented as a drop-in replacement for standard Multi-Head Attention (\cref{alg:goat}), leveraging optimized I/O-aware kernels without modification or computational overhead. The value vectors $\bm{v}_j$ remain purely content-based and utilize the full head dimension, ensuring that spatial information influences only the routing weights and not the mixed representations.

\begin{algorithm}[h]
   \caption{\textsc{Goat} Forward Pass}
   \label{alg:goat}
\begin{algorithmic}[1]
   \STATE {\bfseries Input:} Content vectors $\bm{q}_c, \bm{k}_c, \bm{v}$; Indices $i, j$
   \STATE {\bfseries Parameters:} Spectral weights $\alpha, \beta$; Sink MLP $\phi$
   \STATE {\bfseries Define:} $d_h$ (head dim), $d_c$ (content dim)
   
   \STATE \textcolor{gray}{// 1. Generate Prior Components}
   \STATE $\bm{q}_{\text{rel}} \leftarrow \text{SpectralRotate}(i, \alpha, \beta)$
   \STATE $\bm{k}_{\text{rel}} \leftarrow \text{FourierFeat}(j)$
   \STATE $u(j) \leftarrow \phi(\text{SinusoidalEnc}(j))$
   
   \STATE \textcolor{gray}{// 2. Compose Vectors with Scaling Trick}
   \STATE $\bm{q}' \leftarrow \begin{bmatrix} 
       \bm{q}_c \cdot \sqrt{d_h/d_c} \\ 
       \bm{q}_{\text{rel}} \cdot \sqrt{d_h} \\ 
        \sqrt{d_h} \\
        0
   \end{bmatrix}, \quad
   \bm{k}' \leftarrow \begin{bmatrix} 
       \bm{k}_c \\ 
       \bm{k}_{\text{rel}} \\ 
       u(j) \\
       0
   \end{bmatrix}$

   \STATE \textcolor{gray}{// 3. Compute Attention via Optimized Kernel}
   \STATE \textbf{return} $\textsc{FlashAttention}(\bm{q}', \bm{k}', \bm{v})$
\end{algorithmic}
\end{algorithm}

\section{Attention Sinks in the EOT View}
\label{sec:eot_sinks}

Attention sinks are tokens that absorb probability mass when the query contains little semantic signal. While often viewed as a learned artifact necessary to satisfy the softmax constraint, our EOT formulation offers a first-principles explanation: sinks are the optimal solution to the KL-regularized objective in low-signal regimes.

Throughout this section, we fix a query $i$ and denote the total logit for key $j$ as $z_{ij} = s_{ij} + \mathcal{K}_{ij}$, where $s_{ij}$ is the content score and $\mathcal{K}_{ij}$ is the unnormalized log-prior.

\paragraph{The Inevitability of a Default}

The EOT objective provides a justification for why a default attention pattern must exist. When the content-based evidence $\bm{s}_i$ is weak or ambiguous (high entropy), the KL divergence term dominates the objective, forcing the posterior distribution to converge to the prior distribution.

\begin{thm}[Collapse to Prior]\label{thm:collapse_prior}
Fix a query $i$. Let $\bm{\pi}_i$ be the normalized prior distribution derived from $\mathcal{K}_{i}$. Let $\omega_i \triangleq \max_{k} s_{ik} - \min_{k} s_{ik}$ be the dynamic range of the content scores. The posterior probability $p_{ij}$ satisfies:
\begin{equation}
    \pi_{ij} \exp(-\omega_i) \le p_{ij} \le \pi_{ij} \exp(\omega_i).
\end{equation}
Consequently, in the limit of low content signal where $\omega_i \to 0$, the posterior converges pointwise to the prior: $\lim_{\omega_i \to 0} \bm{p}_i = \bm{\pi}_i$. (See \Cref{app:proof_collapse} for full derivation).
\end{thm}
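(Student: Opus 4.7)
The plan is to work directly from the closed-form posterior established in Proposition~\ref{prop:structural_priors}, namely $p_{ij} \propto \pi_{ij}\exp(s_{ij}/\tau)$ (I will absorb $\tau$ into $s$ so the exponents match the statement), and to sandwich the normalizing constant by its extremal values using the fact that $\bm{\pi}_i$ is a probability vector. The key observation is that the ratio $p_{ij}/\pi_{ij}$ equals $\exp(s_{ij})/Z_i$ where $Z_i = \sum_k \pi_{ik}\exp(s_{ik})$ is a convex combination of the quantities $\exp(s_{ik})$, hence lies between $\exp(\min_k s_{ik})$ and $\exp(\max_k s_{ik})$.

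First I would write the posterior in the explicit form
\begin{equation*}
\frac{p_{ij}}{\pi_{ij}} \;=\; \frac{\exp(s_{ij})}{\sum_{k}\pi_{ik}\exp(s_{ik})},
\end{equation*}
which is well-defined as long as $\pi_{ij}>0$ (and the statement is vacuous where $\pi_{ij}=0$). For the upper bound, I would lower-bound the denominator by $\exp(\min_k s_{ik})\sum_k \pi_{ik} = \exp(\min_k s_{ik})$, giving $p_{ij}/\pi_{ij} \le \exp(s_{ij}-\min_k s_{ik}) \le \exp(\omega_i)$. Symmetrically, upper-bounding the denominator by $\exp(\max_k s_{ik})$ yields $p_{ij}/\pi_{ij} \ge \exp(s_{ij}-\max_k s_{ik}) \ge \exp(-\omega_i)$. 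Together these give the two-sided bound claimed in the theorem.

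For the limiting statement, I would simply invoke the squeeze theorem: as $\omega_i \to 0$ both $\exp(\pm\omega_i)\to 1$, so $p_{ij}/\pi_{ij}\to 1$ uniformly in $j$, hence $\bm{p}_i\to\bm{\pi}_i$ pointwise (and in fact in total variation, since the support is finite). I would also remark that the convergence is quantitative: $|p_{ij}-\pi_{ij}| \le \pi_{ij}(\exp(\omega_i)-1)$, which gives an $O(\omega_i)$ rate as $\omega_i\to 0$.

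There is no real obstacle here; the argument is essentially a one-line extremization over a probability-weighted sum of exponentials. The only subtlety worth flagging in the write-up is making the temperature convention explicit (either state the theorem for the already-scaled scores $s_{ij}/\tau$, or replace $\omega_i$ by $\omega_i/\tau$ in the bound), and noting that tightness is achieved exactly when the mass in $\bm{\pi}_i$ is concentrated on the argmin or argmax of $\bm{s}_i$.
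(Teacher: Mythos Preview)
Your proof is correct and essentially identical to the paper's: both bound the normalizer $Z_i=\sum_k \pi_{ik}\exp(s_{ik})$ between $\exp(\min_k s_{ik})$ and $\exp(\max_k s_{ik})$ and then invoke the squeeze theorem, with the only cosmetic difference being that the paper first shifts scores by $s_{\min}$ before bounding numerator and denominator separately, whereas you phrase the same step as a convex-combination argument. Your remarks on the $O(\omega_i)$ rate, tightness, and the temperature convention are useful additions not present in the paper's proof.
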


This theorem implies that every attention head must revert to its prior when semantic signal is lost. The distinction between robust models and unstable models lies in the shape of this prior.

\paragraph{Formalizing Sinks via Margins}

To ensure stability, the prior $\bm{\pi}$ must be sharply peaked, rather than uniform. We formalize this using the concept of a logit margin.

\begin{defn}[Sink by Margin]\label{def:sink_margin}
For a query $i$, a key $j^\star$ is an \emph{attention sink} with margin $m_i(j^\star)$ if:
\begin{equation}
    m_i(j^\star) \triangleq \min_{k\neq j^\star}\big( z_{i j^\star} - z_{ik}\big) > 0.
\end{equation}
\end{defn}

A positive margin guarantees a lower bound on probability mass, decoupling the sink's stability from the sequence length. However, the margin decomposes into two sources:
\begin{equation}
    z_{ij^\star} - z_{ik} = \underbrace{(s_{ij^\star} - s_{ik})}_{\text{Content}} + \underbrace{(\mathcal{K}_{ij^\star} - \mathcal{K}_{ik})}_{\text{Prior}}.
\end{equation}

\textbf{Standard Attention (Content Sink):} Since the implicit prior is uniform, $\mathcal{K}_{ij} = -\log L$. Creating a sink requires $(s_{ij^\star} - s_{ik}) > 0$. The model must learn a generic key vector $\bm{k}_{c, j^\star}$ with large norm to force a high dot product across all queries. This entangles the structural role of the sink with the semantic representation of token $j^\star$.

\textbf{\textsc{Goat} (Prior Sink):} Our method allows the creation of a sink via the second term. By learning a large key-specific bias $u(j^\star)$, we ensure $u(j^\star) - u(k) > 0$. This establishes a robust default without constraining the content vectors $\bm{k}_{c}$. The unscaled parameterization (\autoref{eqn:29}) effectively guarantees this preference by providing a gradient signal $\sqrt{d_c}$ times larger for defaults than the content channel.

\paragraph{Stability and Total Context Sensitivity}

We now quantify the benefit of a prior-driven sink by analyzing the stability of the output vector, $\bm{o}_i = \sum_j p_{ij}\bm{v}_j$, which is the weighted sum of value vectors. To do this, we measure the output's response to a small perturbation, $\Delta\bm{v}_k$, applied to the value vector of each context token $k \in \mathcal{C} = \{k \mid k \neq j^\star\}$. Ideally, the output should be invariant to such noise.

\begin{definition}[Total Context Sensitivity]
For a given query $i$ and sink token $j^*$, we define the context $\mathcal{C}$ as the set of all other tokens in the sequence. The sensitivity of the output to the value vectors from this context is the total probability mass allocated to it:\begin{equation}
\Psi(\mathcal{C}) \;\triangleq\; \sum_{k \in \mathcal{C}} p_{ik} \;=\; 1 - p_{i j^\star}.
\end{equation}
Equivalently, if we perturb the context value vectors such that $\|\Delta \bm{v}_k\|_2 \le \varepsilon$ for all $k \in \mathcal{C}$ while keeping the query and key vectors fixed, the change in the output is bounded:\begin{equation}
\|\Delta \bm{o}_i\|_2 \le \sum_{k \in \mathcal{C}} p_{ik} \| \Delta \bm{v}_k \|_2 \le \varepsilon \sum_{k\in \mathcal{C}} p_{ik} = \varepsilon\,\Psi(\mathcal{C}).
\end{equation}
\end{definition}

The following theorem demonstrates that standard attention is asymptotically unstable, while \textsc{Goat} suppresses sensitivity exponentially.

\begin{thm}[Context Sensitivity Bounds]\label{thm:sensitivity_bounds}
Consider a sequence of length $L$ smaller than the prior's period. Let $\mathcal{C}$ be the set of context tokens excluding the sink $j^\star$. We analyze the sensitivity in the low-signal limit ($\omega_i \to 0$). (See \Cref{app:proof_sensitivity} for full derivation).
\begin{enumerate}
    \item \textbf{Uniform Prior (Standard):} With a uniform prior, sensitivity converges to 1 as sequence length increases:
    \begin{equation}
        \lim_{L \to \infty} \lim_{\omega_i \to 0} \Psi_{\text{uni}}(\mathcal{C}) = 1.
    \end{equation}
    \item \textbf{Peaked Prior (\textsc{Goat}):} If the prior establishes a sink margin $\delta = \min_{k \in \mathcal{C}} (\mathcal{K}_{ij^\star} - \mathcal{K}_{ik}) > 0$, the sensitivity can be bounded arbitrarily low for any finite $L$:
    \begin{equation}
        \lim_{\omega_i \to 0} \Psi_{\text{sink}}(\mathcal{C}) \le \frac{L-1}{\exp(\delta) + L - 1}.
    \end{equation}
\end{enumerate}
\end{thm}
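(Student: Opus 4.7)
The plan is to use \Cref{thm:collapse_prior} to collapse the $\omega_i \to 0$ analysis to a direct calculation on the normalized prior. First I would invoke the two-sided bound $\pi_{ij}e^{-\omega_i} \le p_{ij} \le \pi_{ij}e^{\omega_i}$ from that theorem to pass the $\omega_i \to 0$ limit through the finite sum $\sum_{k \in \mathcal{C}} p_{ik}$, yielding $\lim_{\omega_i \to 0} \Psi(\mathcal{C}) = 1 - \pi_{ij^\star}$ in both scenarios. It then suffices to bound $\pi_{ij^\star}$ under each prior specification.

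For Part 1, specializing to a uniform prior $\mathcal{K}_{ij} \equiv \text{const}$ gives $\pi_{ij} = 1/L$, so $\lim_{\omega_i \to 0}\Psi_{\mathrm{uni}}(\mathcal{C}) = 1 - 1/L$, whose limit as $L \to \infty$ is $1$. No real work is required here beyond noting that a constant log-prior cancels under softmax normalization.

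For Part 2, the key algebraic step is to rewrite the prior as
\[
\pi_{ij^\star} = \frac{1}{1 + \sum_{k \in \mathcal{C}} \exp\!\big(-(\mathcal{K}_{ij^\star} - \mathcal{K}_{ik})\big)},
\]
obtained by dividing numerator and denominator of the softmax by $\exp(\mathcal{K}_{ij^\star})$. The margin hypothesis $\mathcal{K}_{ij^\star} - \mathcal{K}_{ik} \ge \delta$ uniformly bounds each term in the sum by $e^{-\delta}$, so the sum is at most $(L-1)e^{-\delta}$. This gives $\pi_{ij^\star} \ge e^{\delta}/(e^{\delta} + L - 1)$, and therefore $1 - \pi_{ij^\star} \le (L-1)/(e^{\delta} + L - 1)$, matching the claim.

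The hypothesis that $L$ is smaller than the prior's period is a technical safeguard rather than a serious obstacle: it ensures the translation-equivariant spectral component $\mathcal{K}^{\mathrm{rel}}$ does not alias within the window and produce additional large-logit competitors, so that the single margin $\delta$ actually dominates every $k \in \mathcal{C}$. The subtlety I expect to merit the most care is justifying the order of limits, since the margin $\delta$ is a property of the prior alone while the posterior logits also carry the content swing bounded by $\omega_i$; but because $|\mathcal{C}| = L - 1$ is finite and the bound from \Cref{thm:collapse_prior} is uniform in $j$, the $\omega_i \to 0$ limit commutes with the sum and no delicate interchange is needed.
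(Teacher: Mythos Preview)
Your proposal is correct and follows essentially the same route as the paper: both arguments reduce to computing $1 - p_{ij^\star}$ in the $\omega_i \to 0$ limit, observe that the uniform prior gives $p_{ij^\star} = 1/L$, and for the peaked case divide the softmax through by $\exp(\mathcal{K}_{ij^\star})$ and bound each summand by $e^{-\delta}$. Your explicit invocation of \Cref{thm:collapse_prior} to justify the limit interchange is slightly more careful than the paper's informal ``content contributions vanish'' statement, but the underlying computation is identical.
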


Theorem~\ref{thm:sensitivity_bounds} reveals a critical scaling advantage: while the sensitivity of standard attention converges to $1$ as $L \to \infty$ (dominated by context noise), \textsc{Goat} bounds it away from $1$, suppressing noise exponentially with the prior margin $\delta$. Consequently, maintaining stability requires only logarithmic margin growth ($\delta \sim \ln L$). This is trivially achievable via the unconstrained bias $u(j)$, allowing the model to decouple sink behavior from semantic representation.

\section{Experiments}

This section is designed to empirically validate three core claims: (1) \textsc{Goat} achieves the best of both worlds, combining the expressivity of learnable priors with the length extrapolation robustness of static encodings, surpassing both rigid heuristics (like RoPE) and non-generalizing learned embeddings, (2) \textsc{Goat} validates the EOT prediction that attention sinks are optimal transport defaults, allowing them to be modeled explicitly and decoupled from semantic content, and (3) \textsc{Goat} is a general mechanism applicable to diverse data modalities and structures, not just 1D sequences.

\paragraph{Ablating the prior}
To validate \textsc{Goat}'s ability to learn disentangled priors, we use a synthetic copy-mixture task where the optimal strategy is to attend to either the first token $j=0$ or the previous token $j=i-1$. \autoref{fig:goat-prior-teaser} demonstrates a successful recovery of the dominant ground-truth structure: the sink component~(a) captures the global preference for $j=0$, while the relative component~(b) learns a periodic diagonal bias peaking at $j=i-1$. The deep negative spectral troughs (blue bands) actively suppress attention to intervening positions, isolating the target diagonal candidate from the background noise, while the content-based mechanism resolves the remaining periodicity.

\paragraph{Language Modeling and Extrapolation} We train 125M parameter models on the C4 dataset \citep{raffel2020t5}. As shown in \autoref{fig:goat_main_results}, \textsc{Goat} lowers in-distribution perplexity by \emph{1.55 points} over ALiBi while maintaining robust extrapolation capabilities where RoPE degrades.

\begin{figure*}[!t]
\setlength{\abovecaptionskip}{1pt}
\setlength{\belowcaptionskip}{0pt}

    \centering
    \vspace{-1.0em}
    \begin{subfigure}[b]{0.32\textwidth}
        \centering        \includegraphics[width=\linewidth]{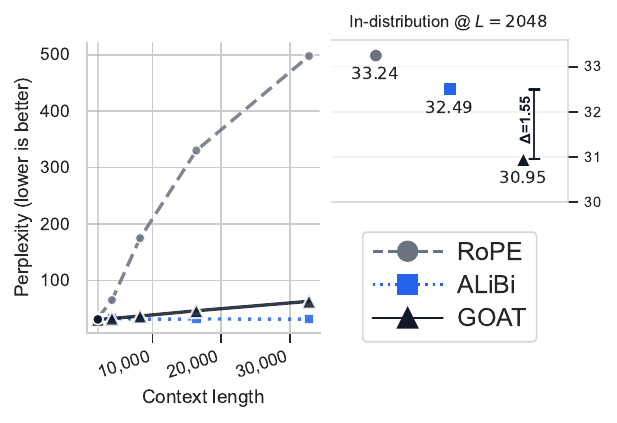}
        \caption{\textbf{Length Extrapolation}}
        \label{fig:extrapolation}
    \end{subfigure}
    \hfill
    \begin{subfigure}[b]{0.32\textwidth}
        \centering        \includegraphics[width=\linewidth]{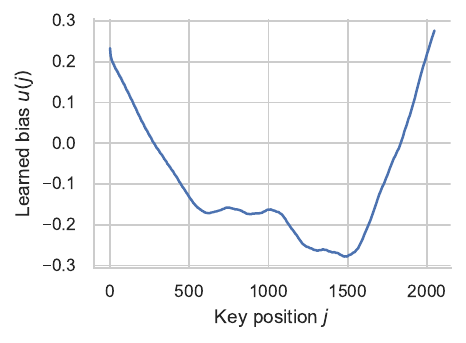}
        \caption{\textbf{Learned Prior Bias $\boldsymbol{u(j)}$}}
        \label{fig:prior_bias}
    \end{subfigure}
    \hfill
    \begin{subfigure}[b]{0.32\textwidth}
        \centering        \includegraphics[width=\linewidth]{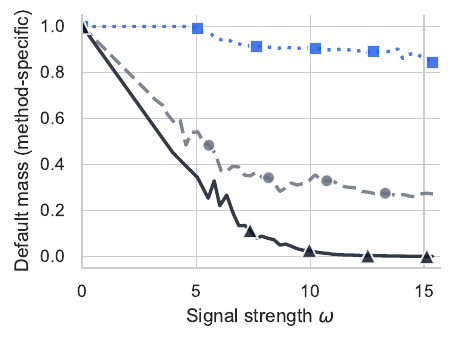}
        \caption{\textbf{Signal Strength vs. Sink Mass}}
        \label{fig:sink_mass}
    \end{subfigure}

    \vspace{0.2em}
    \caption{\textbf{\textsc{Goat} combines the flexibility of learnable priors with robust length extrapolation.} 
    Comparison of 125M parameter models trained on 4B tokens of C4 with context length $L_{\text{train}} = 2048$. 
    \textbf{(a)} Length Extrapolation. Perplexity evaluated on extended sequences. RoPE degrades catastrophically past $L_{\text{train}}$. While ALiBi maintains flat extrapolation, it underfits the training window; the inset reveals \textsc{Goat} improves in-distribution perplexity by \textbf{1.55} points over ALiBi. \textsc{Goat} provides the best trade-off: superior fidelity within the window and robust generalization to $16\times$ length. 
    \textbf{(b)} Learned Prior Bias $u(j)$. The model spontaneously discovers a sharp spike at $j = 0$ (an explicit attention sink) and a rise at $j \approx 2000$ (local recency), validating that these structural needs can be decoupled from content. 
    \textbf{(c)} Signal Strength vs. Sink Mass (Theorem~\ref{thm:collapse_prior}). As signal $\omega$ increases, \textsc{Goat} smoothly sheds sink mass from $\approx 1$ (prior-dominated) to $\approx 0$ (content-driven), while ALiBi remains over-defaulted (fixed distance bias adds an $\mathcal{O}(md)$ margin) and RoPE only partially disengages.}
    \label{fig:goat_main_results}
\end{figure*}

\setlength{\dblfloatsep}{6pt}
\setlength{\dbltextfloatsep}{8pt}
\setlength{\abovecaptionskip}{2pt}
\setlength{\belowcaptionskip}{0pt}

\begin{figure*}[!t]
\setlength{\abovecaptionskip}{1pt}
\setlength{\belowcaptionskip}{0pt}

  \centering
  \begin{subfigure}{0.48\textwidth}
    \centering
    \includegraphics[width=\linewidth]{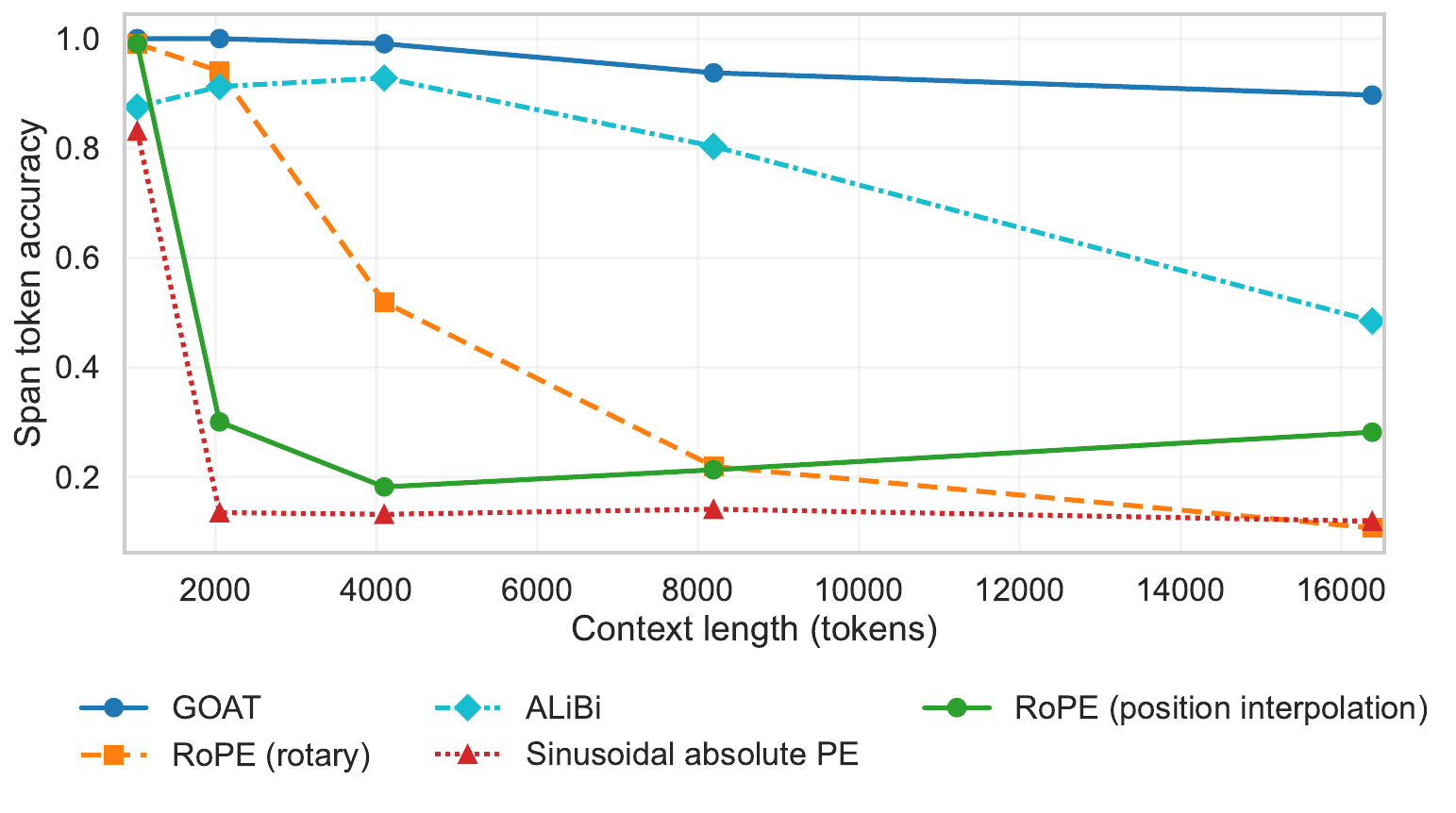}
    \caption{\textbf{Passkey retrieval}}
    \label{fig:passkey_retrieval}
  \end{subfigure}\hfill
  \begin{subfigure}{0.48\textwidth}
    \centering
    \includegraphics[width=\linewidth]{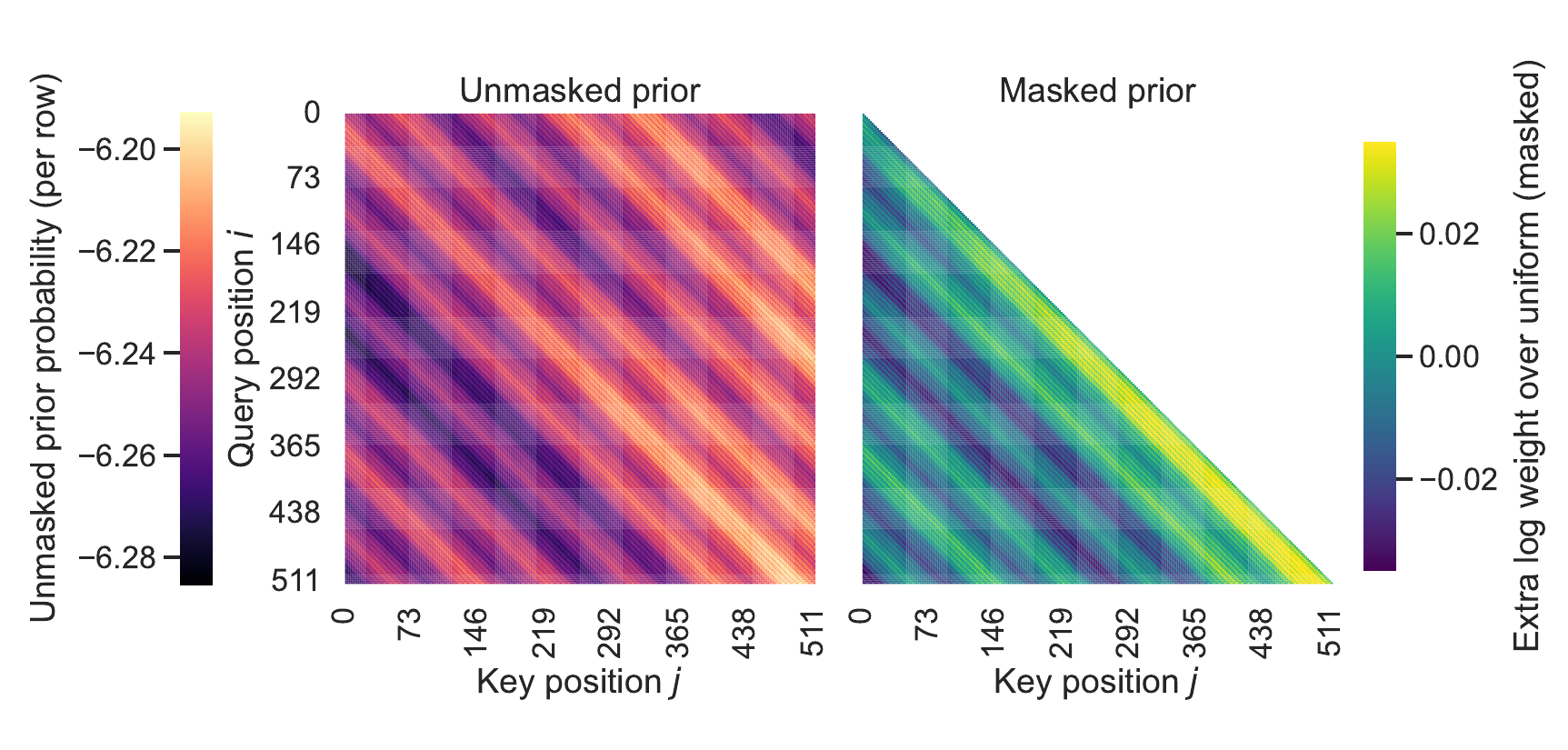}
    \caption{\textbf{Learned prior}}
    \label{fig:chirp_prior}
  \end{subfigure}
  \par\vspace{0.2em}
  \begin{subfigure}{0.98\textwidth}
    \centering
    \includegraphics[width=\linewidth]{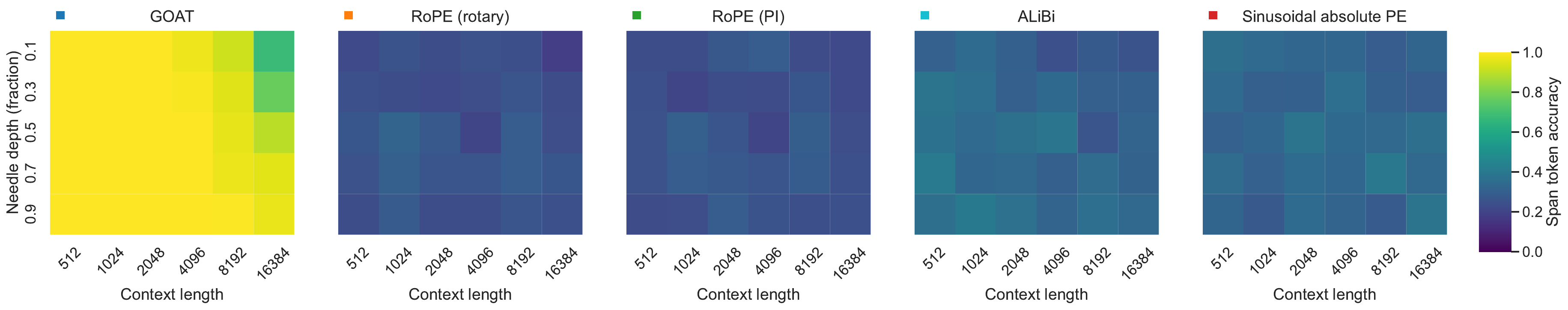}
    \caption{\textbf{Needle-in-a-haystack comparison}} 
    \label{fig:niah_comparison}
  \end{subfigure}

  \caption{
  \textbf{Comparison of \textsc{Goat} to other methods on long-context synthetic tasks.}
  \textbf{(a)} Span-token accuracy on the passkey retrieval task as a function of context length. Each model is trained as a GPT-style language model on sequences up to the training context length (e.g., \(L_{\text{train}}=1024\)) and evaluated on an 8-digit passkey that appears once in the context and must be reproduced at the end of the sequence. The learned prior (\textsc{Goat}) maintains substantially higher accuracy than rotary \citep{su2021roformer}, position-interpolated rotary \citep{chen2023positionalinterpolation}, and sinusoidal absolute position encodings as the context length grows far beyond the training window. \textbf{(b)} Visualization of the learned log-prior for a \textsc{Goat} attention head on a \(512\)-token sequence. The left panel shows the unmasked prior, which allocates larger probability mass to later key positions; the right panel shows the same prior after applying the causal mask and row-renormalization, yielding a strong recency bias along the causal diagonal. This illustrates how the prior implements an explicit, interpretable inductive bias without modifying content scores. \textbf{(c)} Needle-in-a-haystack (NIAH) results: span-token accuracy heatmaps over context length and needle depth (fractional position of the needle) for the same attention variants. The learned prior maintains near-perfect retrieval across depths and lengths, while the rotary, interpolated rotary, and sinusoidal baselines degrade sharply as sequences become longer and the needle moves
  deeper into the context.
  }
  \label{fig:long_context_suite}
\end{figure*}

\paragraph{Long-Context Retrieval} While perplexity measures overall capability, we further evaluate retrieval using long-context synthetic benchmarks \citep{liu2023lost_in_the_middle}. As shown in \autoref{fig:long_context_suite}, \textsc{Goat} maintains near-perfect accuracy on both Passkey Retrieval and Needle-in-a-Haystack (NIAH) tasks at context lengths far exceeding training, whereas other methods degrade catastrophically.

\paragraph{Biological Sequence Modeling} We evaluate \textsc{Goat} against a RoPE baseline on next-token language modeling of human-reference genome sequences, showing improvements and parity on validation NLL, training throughput, and qualitative nucleotide completions (\autoref{fig:goat_vs_rope_three_panel}).

\begin{figure*}[!t]
  \centering

  \begin{subfigure}[t]{0.46\textwidth}
    \centering
    \includegraphics[width=\linewidth]{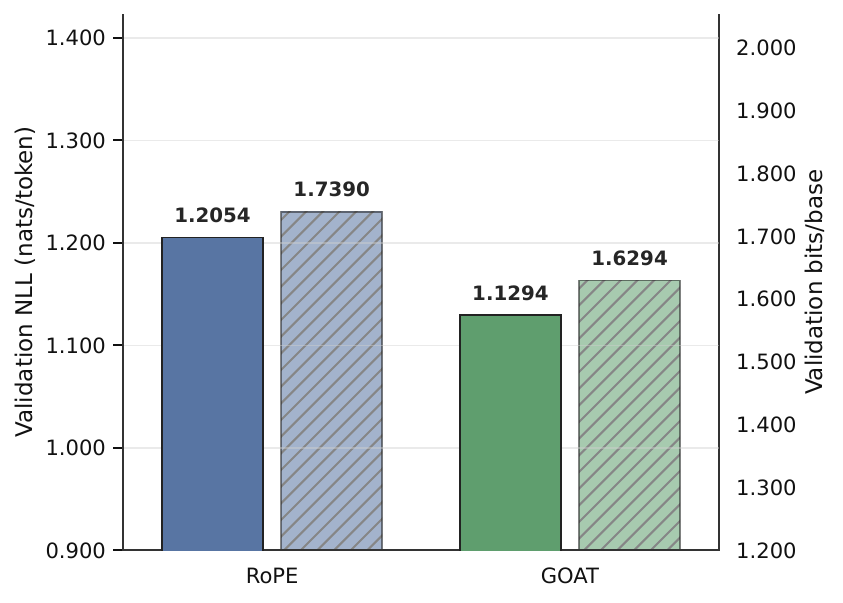}
    \caption{\textbf{Validation NLL} (lower is better).}
    \label{fig:goat_vs_rope_val_nll}
  \end{subfigure}
  \hfill
  \begin{subfigure}[t]{0.46\textwidth}
    \centering
    \includegraphics[width=\linewidth]{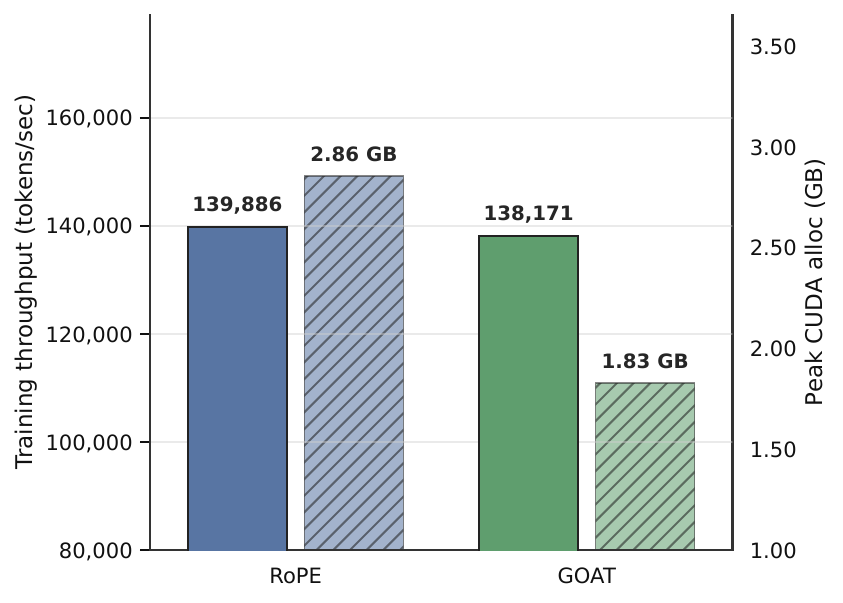}
    \caption{\textbf{Computational Efficiency} (Throughput and Memory).}
    \label{fig:goat_vs_rope_efficiency}
  \end{subfigure}

  \vspace{0.2em}

  \begin{subfigure}[t]{0.94\textwidth}
    \centering
    \includegraphics[width=\linewidth]{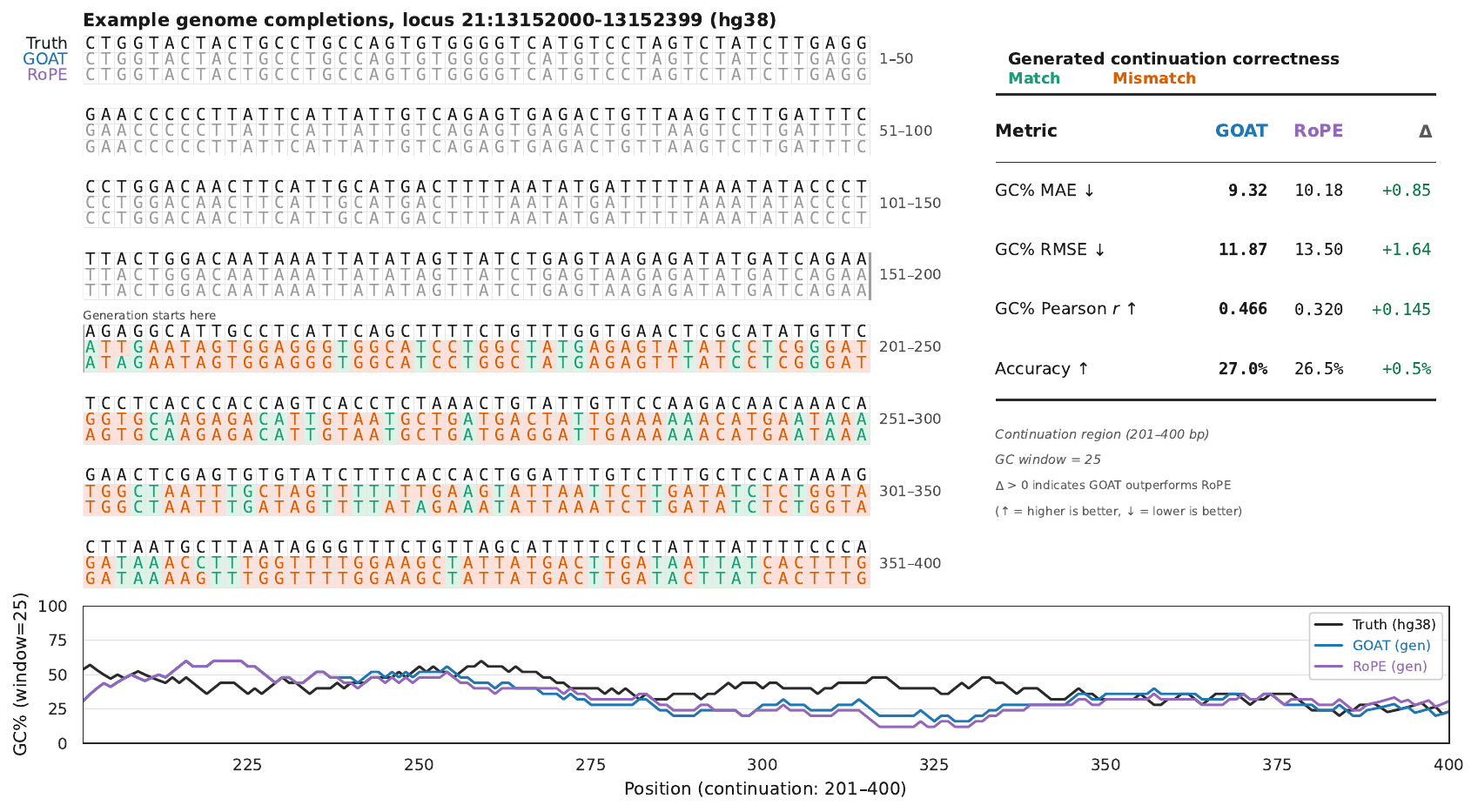}
    \caption{\textbf{Example genome completions.} Prompt shown first, followed by the 200-nt continuation.}
    \label{fig:goat_vs_rope_completions}
  \end{subfigure}

  \caption{
  \textbf{\textsc{Goat} outperforms RoPE on DNA modeling.}
  \textbf{(a)} Validation Negative Log-Likelihood (NLL) on the Human Reference Genome (125M parameter decoder-only LM). \textsc{Goat} achieves lower Validation NLL (solid bars, left axis) and bits/base (hatched bars, right axis) than RoPE under identical training budgets.
  \textbf{(b)} Computational Efficiency. While training throughput (solid bars, left axis) remains comparable, \textsc{Goat} significantly reduces Peak CUDA memory allocation (hatched bars, right axis), dropping from $2.86$ GB to $1.83$ GB ($36\%$ reduction).
  \textbf{(c)} Generative Quality. \textit{Top:} Representative completion example and metrics table. The visualization shows a single sample, while reported metrics are averaged over $N=3$ continuations. Generated nucleotides are colored by alignment to the ground truth (\textcolor{teal}{green} = match, \textcolor{orange}{orange} = mismatch). \textit{Bottom:} Sliding-window GC\% trajectory (window=25) for the generated continuation. \textsc{Goat} (\textcolor{blue}{blue}) tracks the ground-truth statistical profile (\textcolor{black}{black}) more accurately than RoPE (\textcolor{violet}{purple}), evidenced by a higher Pearson correlation ($r=0.466$ vs.\ $0.320$).
  }
  \label{fig:goat_vs_rope_three_panel}
\end{figure*}

\paragraph{Learning Priors on Image Data}

To demonstrate the universality of the mechanism beyond 1D sequences, we apply \textsc{Goat} to Vision Transformers on ImageNet-1k, finding that it spontaneously learns a 2D shift-invariant prior that enables robust zero-shot extrapolation to higher input resolutions (\autoref{fig:vision_results}).

\begin{figure}[H]
    \centering
    \begin{subfigure}[b]{0.49\columnwidth}
        \centering
        \includegraphics[width=\linewidth]{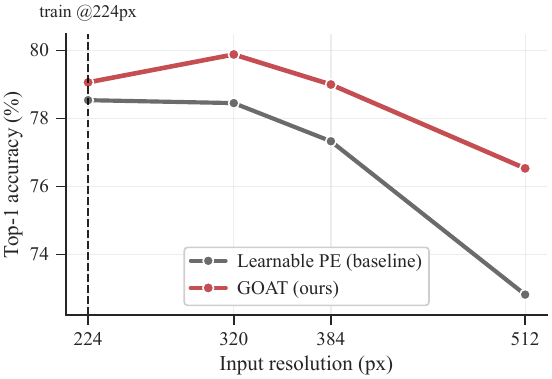}
        \caption{\textbf{Resolution extrapolation}}
        \label{fig:vision_extrap}
    \end{subfigure}
    \hfill
    \begin{subfigure}[b]{0.49\columnwidth}
        \centering
        \includegraphics[width=\linewidth]{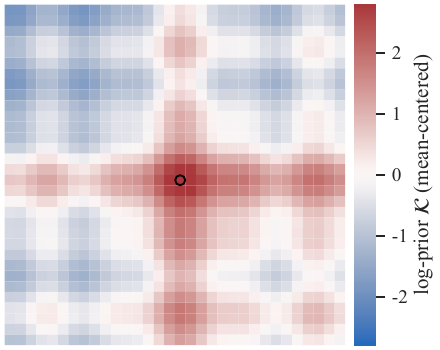}
        \caption{\textbf{Learned log-prior $\boldsymbol{\mathcal{K}}_{ij}$}}
        \label{fig:vision_prior}
    \end{subfigure}

    \vspace{-0.6em}
    \caption{
        \textbf{\textsc{Goat} enables robust zero-shot resolution extrapolation on ImageNet-1k} \citep{deng2009imagenet}.
        Models are trained at $224{\times}224$ and evaluated at higher resolutions without fine-tuning.
        \textbf{(a)} Top-1 accuracy vs.\ input resolution: the baseline ViT with absolute positional embeddings \citep{dosovitskiy2021imageworth16x16words} degrades as resolution increases, while \textsc{Goat} maintains substantially higher accuracy.
        \textbf{(b)} Learned log-prior $\mathcal{K}_{ij}$ (shown relative to the center patch) exhibits a local, shift-invariant inductive bias despite uniform initialization.
    }
    \label{fig:vision_results}
    \vspace{-1.0em}
\end{figure}

\section{Conclusion}

This work identifies that the fragility of standard self-attention, manifesting as poor length generalization and emergent attention sinks, stems from the implicit assumption of a uniform prior within the mechanism’s EOT formulation. We introduced Generalized Optimal transport Attention with Trainable priors (\textsc{Goat}), a mechanism that replaces this naive assumption with a learnable, expressive prior.

\textsc{Goat} achieves three core advances. First, it resolves the trade-off between expressivity and stability, combining the robust length extrapolation of static heuristics like RoPE and ALiBi with the adaptability of learnable priors. Second, it provides a theoretical justification for attention sinks as optimal transport defaults that goes beyond empirical studies of sinks, allowing them to be modeled explicitly rather than entangled with semantic representations. Finally, \textsc{Goat} realizes these benefits within standard I/O-aware kernels, offering a drop-in improvement for modern Transformers with minimal computational overhead.

\section*{Impact Statement}

This paper presents work whose goal is to advance the field of 
Machine Learning. There are many potential societal consequences 
of our work, none which we feel must be specifically highlighted here.

\section*{Acknowledgments}
We thank Stefano Ermon for his helpful feedback and discussions. 

\bibliography{references}
\bibliographystyle{icml2025}

\newpage
\clearpage
\appendix
\onecolumn
\raggedbottom
\allowdisplaybreaks

\section{Proof of Attention as One-Sided Entropic Optimal Transport}
\label{app:attentioneot}

\begin{proof}
The objective is strictly convex. We introduce a Lagrange multiplier $\lambda$ for the mass constraint $\sum_j p_j = 1$. The Lagrangian is:
\begin{align}
    \mathcal{L}(\bm{p}, \lambda) = -\sum_{j=1}^L p_j s_j &+ \tau \sum_{j=1}^L p_j \log p_j \nonumber \\ &+ \lambda \left(\sum_{j=1}^L p_j - 1\right).
\end{align}
The first-order stationarity condition with respect to $p_j$ is:
\begin{equation}
    \frac{\partial \mathcal{L}}{\partial p_j} = -s_j + \tau (\log p_j + 1) + \lambda = 0.
\end{equation}
Solving for $p_j$:
\begin{align}
    \log p_j &= \frac{s_j - \lambda}{\tau} - 1 \implies \\ p_j &= \exp\left(\frac{s_j}{\tau}\right) \exp\left(\frac{-\lambda}{\tau} - 1\right).
\end{align}
The second exponential term is a query-specific constant determined by the constraint $\sum p_j = 1$. Enforcing this constraint yields:
\begin{equation}\label{eqn:7}
    p_j^\star = \frac{\exp(s_j / \tau)}{\sum_{k} \exp(s_k / \tau)}
\end{equation}
which is precisely the softmax function.
\end{proof}

\section{Factorization of the Spectral Prior}
\label{app:spectral_factorization}

In \cref{sec:goat_parameterization}, we introduced a parameterization of the relative log-prior $\mathcal{K}^{\text{rel}}_{ij}$ using a truncated Fourier series. We claimed that the inner product of specific query and key vectors recovers the term $\alpha_r \cos(\omega_r (i - j)) + \beta_r \sin(\omega_r (i - j))$. Here, we provide the explicit derivation.

Recall the definitions of the positional query component $\bm{q}^{(r)}_{\text{rel}, i}$ and key component $\bm{k}^{(r)}_{\text{rel}, j}$ for a specific frequency index $r$:

\begin{align}
    \bm{q}^{(r)}_{\text{rel}, i} &= \begin{bmatrix} \alpha_r \cos(\omega_r i) + \beta_r \sin(\omega_r i) \\ \alpha_r \sin(\omega_r i) - \beta_r \cos(\omega_r i) \end{bmatrix}, \label{eq:app_q_def} \\
    \bm{k}^{(r)}_{\text{rel}, j} &= \begin{bmatrix} \cos(\omega_r j) \\ \sin(\omega_r j) \end{bmatrix}. \label{eq:app_k_def}
\end{align}

We compute the dot product $\langle \bm{q}^{(r)}_{\text{rel}, i}, \bm{k}^{(r)}_{\text{rel}, j} \rangle$ by expanding the element-wise multiplication:

\begin{align}
    \langle \bm{q}^{(r)}_{\text{rel}, i}, \bm{k}^{(r)}_{\text{rel}, j} \rangle &= \Big( \alpha_r \cos(\omega_r i) + \beta_r \sin(\omega_r i) \Big) \cos(\omega_r j) \nonumber \\
    &\quad + \Big( \alpha_r \sin(\omega_r i) - \beta_r \cos(\omega_r i) \Big) \sin(\omega_r j).
\end{align}

Next, we distribute the terms and regroup them by the coefficients $\alpha_r$ and $\beta_r$:

\begin{align}
    \langle \bm{q}^{(r)}_{\text{rel}, i}, \bm{k}^{(r)}_{\text{rel}, j} \rangle &= \alpha_r \cos(\omega_r i)\cos(\omega_r j) + \beta_r \sin(\omega_r i)\cos(\omega_r j) \nonumber \\
    &\quad + \alpha_r \sin(\omega_r i)\sin(\omega_r j) - \beta_r \cos(\omega_r i)\sin(\omega_r j) \nonumber \\[1em]
    &= \alpha_r \Big[ \cos(\omega_r i)\cos(\omega_r j) + \sin(\omega_r i)\sin(\omega_r j) \Big] \nonumber \\
    &\quad + \beta_r \Big[ \sin(\omega_r i)\cos(\omega_r j) - \cos(\omega_r i)\sin(\omega_r j) \Big].
\end{align}

We apply the standard trigonometric angle difference identities:
\begin{align}
    \cos(A - B) &= \cos A \cos B + \sin A \sin B, \\
    \sin(A - B) &= \sin A \cos B - \cos A \sin B.
\end{align}

Substituting these identities with $A = \omega_r i$ and $B = \omega_r j$, we obtain:

\begin{align}
    \langle \bm{q}^{(r)}_{\text{rel}, i}, \bm{k}^{(r)}_{\text{rel}, j} \rangle &= \alpha_r \cos(\omega_r i - \omega_r j) + \beta_r \sin(\omega_r i - \omega_r j) \nonumber \\
    &= \alpha_r \cos\big(\omega_r (i - j)\big) + \beta_r \sin\big(\omega_r (i - j)\big).
\end{align}

This confirms that the proposed factorization exactly recovers the target spectral component of the log-prior.

\section{Theoretical Motivation: Spectral Representation}
\label{app:bochner}

Our choice of a spectral parameterization for the relative prior is grounded in Bochner's Theorem \citep{rasmussen2006gaussian}, which provides the fundamental link between shift-invariant kernels and the frequency domain.

\begin{theorem}[Bochner]
A continuous, translation-invariant kernel $k: \mathbb{R}^d \times \mathbb{R}^d \to \mathbb{C}$ where $k(x, y) = \kappa(x - y)$ is positive definite if and only if $\kappa$ is the Fourier transform of a finite, non-negative Borel measure $\mu$. That is:
\begin{equation}
    \kappa(\delta) = \int_{\mathbb{R}^d} e^{-i \omega^\top \delta} d\mu(\omega).
\end{equation}
\end{theorem}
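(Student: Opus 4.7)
The plan is to prove the two directions separately, with the sufficiency direction reducing to a short direct computation and the necessity direction requiring a functional-analytic argument via the Riesz--Markov representation theorem.

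For the sufficiency direction (measure implies positive definite), I would fix points $x_1, \ldots, x_n \in \mathbb{R}^d$ and scalars $c_1, \ldots, c_n \in \mathbb{C}$, and substitute the integral representation into the quadratic form $\sum_{i,j} c_i \bar c_j \kappa(x_i - x_j)$. Interchanging the finite sum with the integral (justified by finiteness of $\mu$) and factoring yields $\int |\sum_i c_i e^{-i\omega^\top x_i}|^2 \, d\mu(\omega)$, which is manifestly non-negative since $\mu \geq 0$. Continuity of $\kappa$ on $\mathbb{R}^d$ follows from dominated convergence applied to the uniformly bounded integrand $e^{-i\omega^\top \delta}$ against the finite measure $\mu$.

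For the necessity direction I would define the sesquilinear form $B(f,g) = \iint \kappa(x-y) f(x) \overline{g(y)} \, dx \, dy$ on the Schwartz class $\mathcal{S}(\mathbb{R}^d)$. Pointwise positive definiteness of $\kappa$ extends, via approximation of the double integral by Riemann sums together with continuity of $\kappa$, to the statement $B(f,f) \geq 0$ for every $f \in \mathcal{S}$. Using the Plancherel identity to transport $B(f,f)$ to the frequency domain identifies it with a positive linear functional on the cone of non-negative Schwartz functions of the form $|\hat f|^2$. Invoking the Riesz--Markov representation theorem on $C_c(\mathbb{R}^d)$, after a density argument extending the functional beyond this cone, then yields a non-negative Borel measure $\mu$ satisfying $B(f,f) = \int |\hat f|^2 \, d\mu$. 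Finiteness of $\mu$ is extracted from the elementary bound $|\kappa(\delta)| \leq \kappa(0) < \infty$ (itself a consequence of the $1 \times 1$ and $2 \times 2$ positive definiteness constraints) by choosing a sequence $f_n \in \mathcal{S}$ whose Fourier transforms concentrate mass on growing balls, so that $\mu(\mathbb{R}^d) = \kappa(0)$. Finally, one identifies $\kappa$ with $\hat\mu$ by evaluating both continuous functions at an arbitrary point through the same duality pairing and invoking Fourier inversion on Schwartz test functions.

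The main obstacle will be rigorously extending discrete positive definiteness of $\kappa$ to positivity of $B(f,f)$ on a class of test functions rich enough for Riesz--Markov to apply, and ensuring that the resulting $\mu$ is a genuine finite Borel measure rather than merely a tempered distribution. A secondary subtlety is that $\kappa$ need not be integrable, so the identification $\kappa = \hat\mu$ cannot proceed by direct Fourier inversion on $L^1$ and instead must be carried out pointwise using joint continuity of both sides and density of test functions in the relevant topology. Once these two technical points are handled, the biconditional follows immediately.
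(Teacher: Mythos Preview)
The paper does not prove Bochner's theorem. It is stated in Appendix~\ref{app:bochner} as a classical result, cited to \citet{rasmussen2006gaussian}, and used only as motivation for the spectral parameterization of the relative log-prior; no proof or proof sketch is given. So there is nothing to compare your proposal against.

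That said, your outline is a recognisable and essentially sound route to the classical theorem. The sufficiency direction is the standard one-line computation. For necessity, your plan of passing from discrete positive definiteness to positivity of the integral form $B(f,f)$ via Riemann-sum approximation, then invoking Riesz--Markov on the induced positive functional, is one of the standard arguments (close in spirit to the treatment in Rudin's \emph{Fourier Analysis on Groups}). You have also correctly flagged the two genuine technical points: showing that functions of the form $|\hat f|^2$ with $f\in\mathcal{S}$ are rich enough to determine a measure via Riesz--Markov, and carrying out the identification $\kappa=\hat\mu$ without assuming $\kappa\in L^1$. Both are handled in the literature by mollification and a tempered-distribution argument, and your proposal anticipates this. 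For the purposes of this paper, however, a proof is neither expected nor provided; the theorem is invoked as background.
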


In standard kernel methods (such as Gaussian Processes or Support Vector Machines) and linearized attention approximations (such as Performer), the kernel function represents a similarity metric and must strictly be positive definite. This constraint forces the underlying spectral measure $\mu$ to be non-negative. Consequently, the learned weights in a Fourier approximation of such kernels must be positive, restricting the model to learning attractive interactions where similarity generally decays with distance.

\textsc{Goat} circumvents this limitation by operating on the \textit{log}-prior within the EOT objective. The attention distribution is derived as:
\begin{equation}
    p_{ij} \propto \exp(s_{ij} + \mathcal{K}_{ij}).
\end{equation}
Since the exponential function maps the real line to positive reals ($\mathbb{R} \to \mathbb{R}_{>0}$), the resulting prior $\pi_{ij} \propto \exp(\mathcal{K}_{ij})$ remains a valid probability distribution regardless of the sign of $\mathcal{K}_{ij}$.

This formulation parameterizes $\mathcal{K}_{ij}$ directly in log-space, avoiding the positivity constraints of Mercer kernels. Consequently, the spectral weights $\alpha_r$ and $\beta_r$ are free to take negative values. A positive weight corresponds to attraction (a peak at zero displacement), while a negative weight induces repulsion (a trough at zero displacement). This enables the model to actively suppress attention based on relative position, a capability mathematically inaccessible to standard positive-definite kernel approximations required by linear attention.

\section{Proofs of Stability Theorems}
\label{app:stability_proofs}

\subsection{Proof of Theorem~\ref{thm:collapse_prior} (Collapse to Prior)}
\label{app:proof_collapse}
\begin{proof}
The attention probability $p_{ij}$ is invariant to additive shifts in the score vector $\bm{s}_i$. Let $s_{\min} = \min_k s_{ik}$ and define the shifted scores as $s'_{ik} = s_{ik} - s_{\min}$. By definition, for all $k$, we have:
\begin{equation}
    0 \le s'_{ik} \le \omega_i.
\end{equation}
The posterior probability is given by:
\begin{equation}
    p_{ij} = \frac{\pi_{ij} \exp(s'_{ij})}{Z_i}, \quad \text{where } Z_i = \sum_{k} \pi_{ik} \exp(s'_{ik}).
\end{equation}
We first derive global bounds for the partition function $Z_i$. Since $x \mapsto \exp(x)$ is monotonic, we can bound every term in the summation using the range of $s'_{ik}$:
\begin{align}
    Z_i &\ge \sum_{k} \pi_{ik} \exp(0) = \sum_{k} \pi_{ik} = 1, \\
    Z_i &\le \sum_{k} \pi_{ik} \exp(\omega_i) \\ &= \exp(\omega_i) \sum_{k} \pi_{ik} = \exp(\omega_i).
\end{align}
We now bound the numerator, $N_{ij} = \pi_{ij} \exp(s'_{ij})$. Using the same range bounds:
\begin{equation}
    \pi_{ij} \le N_{ij} \le \pi_{ij} \exp(\omega_i).
\end{equation}
To find the lower bound for $p_{ij}$, we minimize the numerator and maximize the denominator:
\begin{equation}
    p_{ij} = \frac{N_{ij}}{Z_i} \ge \frac{\pi_{ij}}{\exp(\omega_i)} = \pi_{ij} \exp(-\omega_i).
\end{equation}
To find the upper bound for $p_{ij}$, we maximize the numerator and minimize the denominator:
\begin{equation}
    p_{ij} = \frac{N_{ij}}{Z_i} \le \frac{\pi_{ij} \exp(\omega_i)}{1} = \pi_{ij} \exp(\omega_i).
\end{equation}
As $\omega_i \to 0$, both $\exp(\omega_i) \to 1$ and $\exp(-\omega_i) \to 1$. By the Squeeze Theorem, $\lim_{\omega_i \to 0} p_{ij} = \pi_{ij}$.
\end{proof}

\subsection{Proof of Theorem~\ref{thm:sensitivity_bounds} (Sensitivity Bounds)}
\label{app:proof_sensitivity}

\begin{proof}
The sensitivity is $\Psi(\mathcal{C}) = 1 - p_{ij^\star}$. We analyze the probability $p_{ij^\star}$ in the limit $\omega_i \to 0$, where $s_{ij} \approx s_{ik}$ for all $j,k$, effectively vanishing from the softmax differences.

\textbf{Case 1 (Uniform):} The prior is constant, so $\mathcal{K}_{ij} = -\log L$. In the limit $\omega_i \to 0$, the total logits $z_{ij}$ become uniform. Thus $p_{ij^\star} = 1/L$.
\begin{equation}
    \Psi_{\text{uni}} = 1 - \frac{1}{L} = \frac{L-1}{L} \xrightarrow{L \to \infty} 1.
\end{equation}

\textbf{Case 2 (Peaked):} The prior ensures a margin in the log-space: $\mathcal{K}_{ij^\star} \ge \mathcal{K}_{ik} + \delta$ for all $k \in \mathcal{C}$. In the limit $\omega_i \to 0$, the content contributions vanish, and the total logit margin satisfies $z_{ij^\star} - z_{ik} \ge \delta$.
The sink probability is bounded by:
\begin{align}
    p_{ij^\star} &= \frac{1}{1 + \sum_{k \in \mathcal{C}} \exp(z_{ik} - z_{ij^\star})} \\
    &\ge \frac{1}{1 + \sum_{k \in \mathcal{C}} \exp(-\delta)} \\ &= \frac{1}{1 + (L-1)\exp(-\delta)}.
\end{align}
The sensitivity is the complement:
\begin{align}
    \Psi_{\text{sink}} &\le 1 - \frac{1}{1 + (L-1)\exp(-\delta)} \\ &= \frac{(L-1)\exp(-\delta)}{1 + (L-1)\exp(-\delta)} = \frac{L-1}{\exp(\delta) + L - 1}.
\end{align}
\end{proof}

\section{Priors vs. Positional Encodings: Isomorphism and Disentanglement}
\label{app:prior_vs_pe}

Throughout this work, we frame \textsc{Goat} as learning a prior distribution, yet we benchmark it as a replacement for positional encodings and implement it via query key dot products. This raises a fundamental question regarding the theoretical status of the mechanism. Is an attention prior distinct from a positional encoding, or are they mathematically identical?

This section clarifies that while the two concepts are algebraically isomorphic in the final softmax operation, they represent fundamentally different parameterizations of the attention mechanism. This distinction between how the term is computed versus how it is derived is crucial for understanding the stability benefits of \textsc{Goat}.

\subsection{Algebraic Isomorphism}

Let $z_{ij}$ denote the total logit entering the softmax function for query $i$ and key $j$.

From the perspective of positional encodings, one modifies the input vectors $\bm{q}$ and $\bm{k}$ via functions $\phi$ and $\psi$ such that the dot product encodes geometry. The logit is defined as follows:
\begin{equation}
    z_{ij} = \langle \phi(\bm{q}, i), \psi(\bm{k}, j) \rangle
\end{equation}
From the perspective of EOT, our derivation yields a logit composed of a raw content score plus a log prior term:
\begin{equation}
    z_{ij} = \langle \bm{q}_c, \bm{k}_c \rangle + \mathcal{K}_{ij}
\end{equation}
Because we implement $\mathcal{K}_{ij}$ by augmenting the query and key vectors with positional coordinates, as detailed in the parameterization section, \textsc{Goat} is operationally a positional encoding. We have simply defined a specific transformation that reserves a subspace for position and ensures it interacts additively.

Therefore, we do not claim that priors and encodings are mutually exclusive implementations. Rather, we argue that the EOT formulation provides the derivation for a specific class of encodings that are additive, disentangled, and interpretable.

\subsection{Structural Entanglement}

The utility of the prior framing becomes evident when analyzing how position interacts with content strength.

Standard encodings such as RoPE inject position multiplicatively via rotation. The total logit becomes:
\begin{equation}
    z_{ij}^{\text{RoPE}} = (\bm{R}_i \bm{q})^\top (\bm{R}_j \bm{k})
\end{equation}
Notice that the magnitude of the positional signal is coupled to the magnitude of the semantic vectors. The positional contribution scales with $\|\bm{q}\| \|\bm{k}\|$. This creates structural entanglement. To express a strong positional preference, such as attending to the previous token, the model must increase the norm of the content vectors or align the content vectors specifically to maximize the dot product. This implies that the ability of the model to route information based on position is constrained by the semantic intensity of the tokens.

In contrast, the Prior derivation imposes an additive structure:
\begin{equation}
    z_{ij}^{\textsc{Goat}} = s_{ij}(\text{content}) + \mathcal{K}_{ij}(\text{structure})
\end{equation}
Here, the positional bias $\mathcal{K}_{ij}$ is independent of the content norms. The model can learn a strong structural preference that persists even when the semantic signal $s_{ij}$ is weak or zero. This decoupling explains the stability of \textsc{Goat} in low signal regimes.

\subsection{Generalizing the Mechanism}

We do not view attention priors as a theory of positional encodings that explains existing methods like RoPE. Rather, we view standard positional encodings as heuristic workarounds required when the attention mechanism is constrained to have a uniform prior.

If one forces $\mathcal{K}_{ij}$ to be constant, the only way to introduce geometry is to manipulate the cost function $s_{ij}$ by modulating the vectors $\bm{q}$ and $\bm{k}$. This leads to methods like RoPE. By relaxing this constraint and allowing $\mathcal{K}_{ij}$ to be learned, we obviate the need to manipulate the cost function.

Thus, \textsc{Goat} is a generalization of the attention mechanism. It demonstrates that if the regularizer is sufficiently expressive, the encoding of position into content vectors becomes redundant. We benchmark against standard encodings to demonstrate this sufficiency, showing that a learnable prior is empirically superior to an entangled encoding.

\section{Why This Prior Parameterization Is Optimal Under Our Constraints}
\label{sec:prior_justification}

The EOT interpretation in \Cref{sec:eot_derivation} identifies attention as the solution of a KL-regularized transport problem, and therefore implies the existence of an additive log-prior term in the logits. It does not, by itself, determine how that log-prior should be parameterized. In this section we give a justification for the specific functional class used by \textsc{Goat}. The conclusion is not that one can derive a universally optimal prior for all tasks, which would require specifying a data-generating distribution and a learning objective. Instead, we prove that once one imposes the constraints that are forced by our stated goals, namely kernel compatibility, translation equivariance of the relative component, and stability under length extrapolation, the admissible class of priors collapses to a finite-dimensional trigonometric family. We also prove that the ALiBi-style recency term is the unique maximum-entropy recency prior under a single moment constraint, and that key-only sink terms are the unique minimal-rank mechanism for query-independent defaults.

\subsection{Structural and Computational Constraints}
\label{sec:constraints}

Throughout we work on the integer line. Positions are indexed by integers, and we write $i$ for a query position and $j$ for a key position. We write $\mathcal{K}_{ij}$ for the additive log-prior in the attention logits, so that $z_{ij} = s_{ij} + \mathcal{K}_{ij}$ and $p_{i\cdot} = \softmax(z_{i\cdot})$.

Our first constraint is imposed by the claim that \textsc{Goat} can be realized in a single unmodified SDPA call without materializing an $L\times L$ bias matrix.

\begin{definition}[SDPA-compatibility]
\label{def:sdpa_compatible}
A log-prior $\mathcal{K}:\mathbb{Z}\times\mathbb{Z}\to\mathbb{R}$ is \emph{SDPA-compatible} with positional dimension $d_p<\infty$ if there exist functions
\begin{align}
\varphi:\mathbb{Z}\to\mathbb{R}^{d_p},\qquad \psi:\mathbb{Z}\to\mathbb{R}^{d_p}
\end{align}
such that for all $i,j\in\mathbb{Z}$,
\begin{equation}
\mathcal{K}_{ij}=\langle \varphi(i), \psi(j)\rangle.
\label{eq:sdpa_bilinear}
\end{equation}
\end{definition}

The second constraint formalizes the intended semantics of a relative positional prior. The relative component should depend on displacement and not on absolute position.

\begin{definition}[Translation equivariance of the relative log-prior]
\label{def:translation_equivariant}
A log-prior $\mathcal{K}^{\mathrm{rel}}$ is \emph{translation equivariant} if
\begin{equation}
\mathcal{K}^{\mathrm{rel}}_{i+a,\,j+a}=\mathcal{K}^{\mathrm{rel}}_{ij}
\quad\text{for all }i,j,a\in\mathbb{Z}.
\label{eq:translation_equivariant}
\end{equation}
Equivalently, there exists a function $\kappa:\mathbb{Z}\to\mathbb{R}$ such that
\begin{equation}
\mathcal{K}^{\mathrm{rel}}_{ij}=\kappa(i-j)
\quad\text{for all }i,j\in\mathbb{Z}.
\label{eq:kappa_def}
\end{equation}
\end{definition}

The third constraint applies to the \textit{learned} pattern-matching component. We require this component to be bounded to prevent it from arbitrarily dominating content logits as the context expands, while reserving unbounded trends (like recency slopes) for the fixed inductive bias term.

\begin{assump}[Bounded learned relative log-prior]
\label{assump:bounded_kappa}
The learned displacement kernel $\kappa$ is bounded on $\mathbb{Z}$, that is,
\begin{equation}
\sup_{d\in\mathbb{Z}} |\kappa(d)| < \infty.
\label{eq:kappa_bounded}
\end{equation}
\end{assump}

We now show that these three constraints essentially determine the functional form of the relative prior.

\subsection{Classification of SDPA-Compatible Translation-Equivariant Relative Priors}
\label{sec:classification}

The main result of this subsection is a classification theorem: bounded translation-equivariant priors that are realizable through a fixed finite-dimensional SDPA augmentation are necessarily finite trigonometric polynomials in the displacement. This statement is exact on $\mathbb{Z}$ and does not rely on periodic boundary conditions.

\begin{theorem}[Finite-dimensional SDPA compatibility forces a finite trigonometric relative prior]
\label{thm:finite_trig_classification}
Assume $\mathcal{K}^{\mathrm{rel}}$ is translation equivariant, so that $\mathcal{K}^{\mathrm{rel}}_{ij}=\kappa(i-j)$ for some $\kappa:\mathbb{Z}\to\mathbb{R}$. Assume further that $\mathcal{K}^{\mathrm{rel}}$ is SDPA-compatible with some finite positional dimension $d_p$, and that $\kappa$ is bounded as in \Cref{assump:bounded_kappa}. Then there exist an integer $M\le d_p$, angles $\theta_1,\dots,\theta_M\in[0,2\pi)$, and complex coefficients $c_1,\dots,c_M\in\mathbb{C}$ such that for all $d\in\mathbb{Z}$,
\begin{equation}
\kappa(d)=\sum_{m=1}^M c_m e^{i\theta_m d}.
\label{eq:complex_exponential_sum}
\end{equation}
Moreover, since $\kappa$ is real-valued, one can group conjugate terms and obtain a purely real trigonometric representation: there exist $R\le \lfloor d_p/2\rfloor$, frequencies $\omega_1,\dots,\omega_R\in(0,\pi]$, and real coefficients $\alpha_r,\beta_r\in\mathbb{R}$, along with a real constant $\gamma$, such that for all $d\in\mathbb{Z}$,
\begin{equation}
\kappa(d)=\gamma+\sum_{r=1}^R\big(\alpha_r\cos(\omega_r d)+\beta_r\sin(\omega_r d)\big).
\label{eq:real_trig_sum}
\end{equation}
\end{theorem}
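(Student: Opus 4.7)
The core of my plan is to reinterpret \Cref{eq:sdpa_bilinear} as a finite-rank constraint on the Toeplitz kernel $[\kappa(i-j)]_{i,j}$ and then, via translation equivariance and boundedness, run an operator-theoretic argument on the shift. Define the column sequences $u_j \in \mathbb{R}^{\mathbb{Z}}$ by $u_j(i) := \kappa(i-j)$. The SDPA factorization $\kappa(i-j) = \varphi(i)^\top \psi(j)$ forces each $u_j$ to lie in the image of the fixed linear map $\Phi : \mathbb{R}^{d_p} \to \mathbb{R}^{\mathbb{Z}}$, $v \mapsto (\varphi(i)^\top v)_i$. Hence $V := \operatorname{span}\{u_j : j \in \mathbb{Z}\}$ is a finite-dimensional subspace of dimension $M \le d_p$, and by \Cref{assump:bounded_kappa} it sits inside $\ell^\infty(\mathbb{Z})$.

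Next I would expose the dynamics on $V$. Let $S$ be the shift $(Sx)(i) = x(i-1)$ on $\mathbb{R}^{\mathbb{Z}}$. Translation equivariance gives $u_{j+1}(i) = \kappa(i-j-1) = u_j(i-1) = (Su_j)(i)$, so $Su_j = u_{j+1} \in V$ and $S^{-1}u_j = u_{j-1} \in V$. Thus $V$ is shift-invariant and $T := S|_V$ is an invertible linear automorphism of an $M$-dimensional space. The theorem now reduces to classifying the bounded sequences living in a finite-dimensional shift-invariant subspace of $\ell^\infty(\mathbb{Z})$.

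The main obstacle is the spectral analysis of $T$ under \emph{two-sided} boundedness. I would complexify to $V_{\mathbb{C}}$ and examine the Jordan decomposition of $T$. Solving the recurrence $(S - \lambda I)^\ell v = 0$ by the ansatz $v(i) = u(i)\,\lambda^{-i}$ reveals that every generalized eigenvector of height $\ell$ with eigenvalue $\lambda$ has the form $v(i) = p(i)\,\lambda^{-i}$ for some complex polynomial $p$ of degree $<\ell$. Boundedness on all of $\mathbb{Z}$ then forces (i) $|\lambda| = 1$, otherwise $\lambda^{-i}$ blows up in one of the two directions $i \to \pm\infty$, and (ii) $\deg p = 0$, otherwise polynomial growth dominates. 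I expect this to be the delicate step, since it must simultaneously rule out all Jordan blocks of height $\ge 2$ and all off-circle eigenvalues using only two-sided boundedness rather than the easier one-sided boundedness on $\mathbb{N}$. The conclusion is that $T$ is diagonalizable with unit-modulus spectrum $\{e^{i\theta_1},\dots,e^{i\theta_M}\}$, and $V_{\mathbb{C}}$ admits a basis of eigenvectors $e_m(n) = e^{-i\theta_m n}$.

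Given the spectral decomposition, expand $\kappa = u_0 \in V$ in this eigenbasis to obtain $\kappa(d) = \sum_{m=1}^{M} c_m\,e^{i\theta_m d}$ after absorbing the sign into $\theta_m$, establishing \Cref{eq:complex_exponential_sum} with $M \le d_p$. Finally, reality of $\kappa$ forces the multiset $\{(c_m, \theta_m)\}$ to be invariant under $(c, \theta) \mapsto (\bar c, -\theta)$; grouping each nontrivial conjugate pair into a single $\alpha_r \cos(\omega_r d) + \beta_r \sin(\omega_r d)$ term with $\omega_r \in (0, \pi]$, and sweeping any $\theta = 0$ contribution into the constant $\gamma$, delivers the real form \Cref{eq:real_trig_sum}. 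The frequency count $R \le \lfloor d_p / 2 \rfloor$ follows because each interior frequency consumes two units of the $d_p$-dimensional eigenvalue budget.
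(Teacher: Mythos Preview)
Your proposal is correct and follows essentially the same route as the paper's own proof: build the finite-dimensional span $V$ of the column sequences $u_j(i)=\kappa(i-j)$ from the SDPA factorization, observe that $V$ is shift-invariant with $T=S|_V$ invertible, complexify and Jordan-decompose $T$, then use two-sided boundedness to kill both off-circle eigenvalues and higher Jordan blocks, and finally expand $\kappa=u_0$ in the resulting eigenbasis of pure exponentials. The only cosmetic difference is that the paper decomposes $\kappa$ alone into generalized eigencomponents and bounds those, whereas you note (correctly, since $V\subset\ell^\infty(\mathbb{Z})$) that \emph{every} element of $V_{\mathbb{C}}$ is bounded, forcing $T$ itself to be diagonalizable with unimodular spectrum; and the paper spells out the polynomial form $v(i)=\lambda^{-i}P(i)$ via an explicit finite-difference lemma rather than your ansatz.
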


\begin{proof}
Let $d_p<\infty$ and let $\varphi,\psi:\mathbb{Z}\to\mathbb{R}^{d_p}$ be such that
\begin{equation}
\kappa(i-j)=\langle \varphi(i),\psi(j)\rangle
\quad\text{for all }i,j\in\mathbb{Z}.
\label{eq:bilinear_kappa}
\end{equation}
For each $j\in\mathbb{Z}$ define a sequence $f_j:\mathbb{Z}\to\mathbb{R}$ by $f_j(i)=\kappa(i-j)$. Then \eqref{eq:bilinear_kappa} implies
\begin{equation}
f_j(i)=\sum_{\ell=1}^{d_p}\varphi_\ell(i)\,\psi_\ell(j).
\label{eq:fj_span}
\end{equation}
Hence every $f_j$ lies in the finite-dimensional subspace
\begin{align}
U \triangleq \mathrm{span}\{\varphi_1,\dots,\varphi_{d_p}\}\subset \mathbb{R}^{\mathbb{Z}},
\end{align}
so the subspace
\begin{equation}
V \triangleq \mathrm{span}\{f_j: j\in\mathbb{Z}\}
\label{eq:V_def}
\end{equation}
satisfies $\dim(V)\le d_p$. Note that $\kappa(\cdot)=f_0(\cdot)\in V$.

Define the shift operator $T$ on sequences by $(Tg)(i)\triangleq g(i-1)$. Then for every $j\in\mathbb{Z}$ and every $i\in\mathbb{Z}$,
\begin{align}
(Tf_j)(i)=f_j(i-1)=\kappa((i-1)-j)=\kappa(i-(j+1))=f_{j+1}(i),
\end{align}
so $T(V)\subseteq V$. Since $T$ is invertible on $\mathbb{R}^{\mathbb{Z}}$ with inverse $(T^{-1}g)(i)=g(i+1)$, it follows that $T|_V$ is an invertible linear operator on the finite-dimensional space $V$.

We now work over $\mathbb{C}$ and consider the complexification $V_{\mathbb{C}}\triangleq V\otimes_{\mathbb{R}}\mathbb{C}$, on which $T$ acts $\mathbb{C}$-linearly. Because $V_{\mathbb{C}}$ is finite-dimensional, $T$ has a Jordan decomposition. Therefore there exist eigenvalues $\lambda_1,\dots,\lambda_M\in\mathbb{C}\setminus\{0\}$ (not necessarily distinct) and a direct-sum decomposition of $V_{\mathbb{C}}$ into generalized eigenspaces. Write $\kappa\in V_{\mathbb{C}}$ as a sum of generalized eigencomponents,
\begin{equation}
\kappa=\sum_{m=1}^M v_m,
\label{eq:kappa_generalized_sum}
\end{equation}
where each $v_m\in V_{\mathbb{C}}$ satisfies $(T-\lambda_m I)^{k_m}v_m=0$ for some integer $k_m\ge 1$.

We claim that boundedness of $\kappa$ on $\mathbb{Z}$ forces every generalized eigencomponent to be a genuine eigenvector, and forces every eigenvalue to lie on the unit circle. To make this precise, fix a component $v$ satisfying $(T-\lambda I)^k v=0$ with $\lambda\neq 0$. Define the forward difference operator $\Delta_\lambda \triangleq T-\lambda I$. The identity $(\Delta_\lambda)^k v=0$ implies that the sequence $w(i) \triangleq \lambda^{i}v(i)$ has vanishing $k$th forward difference, hence is a polynomial in $i$ of degree at most $k-1$. We now prove this statement.

Define the standard forward difference $\Delta$ by 
\begin{align}
(\Delta u)(i) \triangleq u(i)-u(i-1).
\end{align}
Then
\begin{align}
(\Delta w)(i)=w(i)-w(i-1)=\lambda^{i}v(i)-\lambda^{i-1}v(i-1)=\lambda^{i-1}\big(\lambda v(i)-v(i-1)\big)=-\lambda^{i-1}(\Delta_\lambda v)(i).
\end{align}
Iterating this identity shows that for every integer $r\ge 1$,
\begin{equation}
(\Delta^r w)(i)=(-1)^r \lambda^{i-r}\big((\Delta_\lambda)^r v\big)(i).
\label{eq:diff_relation}
\end{equation}
In particular, if $(\Delta_\lambda)^k v=0$, then $(\Delta^k w)=0$ on $\mathbb{Z}$. We now use the following lemma.

\begin{lem}[Sequences with vanishing $k$th forward difference are polynomials]
\label{lem:finite_difference_polynomial}
Let $w:\mathbb{Z}\to\mathbb{C}$ satisfy $\Delta^k w\equiv 0$ for some $k\ge 1$. Then there exists a polynomial $P$ of degree at most $k-1$ such that $w(i)=P(i)$ for all $i\in\mathbb{Z}$.
\end{lem}

\begin{proof}
For $t\ge 0$ define the binomial coefficient polynomial $\binom{i}{t}$ by
\begin{align}
\binom{i}{t}=\frac{i(i-1)\cdots(i-t+1)}{t!},
\end{align}
with the convention $\binom{i}{0}=1$. These are polynomials in $i$ of degree $t$. They satisfy the exact identity
\begin{equation}
\Delta \binom{i}{t} = \binom{i-1}{t-1}
\quad\text{for all }t\ge 1.
\label{eq:binomial_difference}
\end{equation}
Iterating \eqref{eq:binomial_difference} yields $\Delta^t \binom{i}{t}=1$ and $\Delta^{t+1}\binom{i}{t}=0$.

We show that 
\begin{align}
\bigg\{\binom{i}{0},\binom{i}{1},\dots,\binom{i}{k-1}\bigg\}
\end{align}
forms a basis for the space of polynomial sequences of degree at most $k-1$, and then show that any $w$ with $\Delta^k w=0$ lies in their span. Consider the map $\Phi$ from the vector space spanned by these binomial polynomials into $\mathbb{C}^k$ defined by
\begin{align}
\Phi(u)=(u(0),(\Delta u)(0),\dots,(\Delta^{k-1}u)(0)).
\end{align}
By the identities above, $\Phi(\binom{i}{t})$ has the form of a standard basis vector, specifically $(0,\dots,0,1,0,\dots,0)$ with the $1$ in the $(t+1)$st coordinate. Hence $\Phi$ is an isomorphism onto $\mathbb{C}^k$. Now let $w$ satisfy $\Delta^k w=0$. Then the $k$ values $(w(0),(\Delta w)(0),\dots,(\Delta^{k-1}w)(0))$ determine $w$ uniquely by discrete integration, because for each $n$ one has
\begin{align}
w(n)=w(0)+\sum_{m=1}^n (\Delta w)(m),
\end{align}
and similarly $(\Delta w)$ is determined by $(\Delta w)(0)$ and $(\Delta^2 w)$, and so on, terminating at $\Delta^{k-1}w$, which is constant since $\Delta^k w=0$. Therefore there exists a unique $u$ in the span of $\{\binom{i}{t}\}_{t=0}^{k-1}$ with $\Phi(u)=\Phi(w)$, and uniqueness implies $u=w$ on all integers. Hence $w$ equals a polynomial of degree at most $k-1$. \end{proof}

Applying \Cref{lem:finite_difference_polynomial} to $w(i)=\lambda^{i}v(i)$ yields a polynomial $P$ of degree at most $k-1$ such that
\begin{equation}
v(i)=\lambda^{-i} P(i)
\quad\text{for all }i\in\mathbb{Z}.
\label{eq:generalized_eigen_form}
\end{equation}
Now invoke boundedness. If $|\lambda|>1$, then $|\lambda^{-i}|$ grows without bound as $i\to -\infty$, and therefore $v$ is unbounded unless $P\equiv 0$. If $|\lambda|<1$, then $|\lambda^{-i}|$ grows without bound as $i\to +\infty$, and again $v$ is unbounded unless $P\equiv 0$. Since $\kappa$ is bounded on $\mathbb{Z}$ and $\kappa$ is a finite sum of such components, every nonzero component must satisfy $|\lambda|=1$.

Assume $|\lambda|=1$ and $P$ has degree at least one. Then $|P(i)|\to\infty$ as $|i|\to\infty$, while $|\lambda^{-i}|=1$ for all $i$, so $|v(i)|=|P(i)|$ is unbounded, contradicting boundedness. Therefore for every nonzero component, $P$ must be constant, which means $k=1$ and $v$ is an eigenvector of $T$. Consequently each nonzero term in \eqref{eq:kappa_generalized_sum} has the form $v_m(i)=c_m \lambda_m^{-i}$ with $|\lambda_m|=1$. Writing $\lambda_m=e^{-i\theta_m}$ yields $v_m(i)=c_m e^{i\theta_m i}$. Summing gives \eqref{eq:complex_exponential_sum}.

Finally, since $\kappa$ is real-valued, for each $\theta$ the coefficient of $e^{i\theta d}$ must be paired with the conjugate coefficient of $e^{-i\theta d}$ to produce a real sum. Grouping conjugate pairs yields \eqref{eq:real_trig_sum} with real coefficients $\alpha_r,\beta_r$ and a possible constant term $\gamma$ corresponding to $\theta=0$. This completes the proof.
\end{proof}

\paragraph{Consequences for parameterization.}
\Cref{thm:finite_trig_classification} does not merely motivate a Fourier-like form. It states that under SDPA-compatibility, translation equivariance, and boundedness, the relative prior must be a finite trigonometric polynomial. In particular, using $R$ frequencies corresponds to choosing $d_p=2R$ real dimensions, and \eqref{eq:real_trig_sum} is the most general admissible form in that class.

We also need an exact dot-product realization of each trigonometric term.

\begin{proposition}[Exact bilinear realization of a full Fourier mode]
\label{prop:mode_factorization}
Fix $\omega\in\mathbb{R}$ and $\alpha,\beta\in\mathbb{R}$. Define
\begin{align}
q_{\omega,\alpha,\beta}(i)=
\begin{bmatrix}
\alpha\cos(\omega i)+\beta\sin(\omega i)\\
\alpha\sin(\omega i)-\beta\cos(\omega i)
\end{bmatrix},
\qquad
k_{\omega}(j)=
\begin{bmatrix}
\cos(\omega j)\\
\sin(\omega j)
\end{bmatrix}.
\end{align}
Then for all $i,j\in\mathbb{Z}$,
\begin{align}
\langle q_{\omega,\alpha,\beta}(i),\,k_{\omega}(j)\rangle
=
\alpha\cos(\omega(i-j))+\beta\sin(\omega(i-j)).
\end{align}
\end{proposition}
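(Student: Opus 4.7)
The plan is a direct algebraic verification that mirrors the derivation already given in \Cref{app:spectral_factorization}, specialized to a single frequency $\omega$. First I would expand the two-component inner product coordinate-wise, obtaining
\begin{align*}
\langle q_{\omega,\alpha,\beta}(i),\,k_\omega(j)\rangle
&= \bigl(\alpha\cos(\omega i)+\beta\sin(\omega i)\bigr)\cos(\omega j) \\
&\quad + \bigl(\alpha\sin(\omega i)-\beta\cos(\omega i)\bigr)\sin(\omega j).
\end{align*}

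Next I would distribute the products and regroup the terms by the scalar coefficients. The $\alpha$-terms collect into $\cos(\omega i)\cos(\omega j)+\sin(\omega i)\sin(\omega j)$, while the $\beta$-terms collect into $\sin(\omega i)\cos(\omega j)-\cos(\omega i)\sin(\omega j)$. Applying the standard angle-difference identities $\cos(A-B)=\cos A\cos B+\sin A\sin B$ and $\sin(A-B)=\sin A\cos B-\cos A\sin B$ with $A=\omega i$ and $B=\omega j$ then collapses these two groups into $\cos(\omega(i-j))$ and $\sin(\omega(i-j))$ respectively, yielding the target $\alpha\cos(\omega(i-j))+\beta\sin(\omega(i-j))$.

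There is no real obstacle here: the construction of $q_{\omega,\alpha,\beta}$ was engineered precisely so that the angle-difference identities close the expression, and the calculation uses only elementary trigonometric identities that hold for all real arguments, so the identity is valid for all $i,j\in\mathbb{R}$, not merely $\mathbb{Z}$. The proposition functions as the single-frequency building block whose concatenation across $R$ frequencies realizes, inside a single SDPA dot product of positional dimension $d_p=2R$, the full finite trigonometric relative prior guaranteed by \Cref{thm:finite_trig_classification}. This justifies the $\alpha_r,\beta_r$ parameterization used throughout \textsc{Goat} and confirms that the factorization incurs no approximation error relative to the target Fourier mode.
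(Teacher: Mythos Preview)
Your proposal is correct and follows exactly the same route as the paper's proof: expand the two-coordinate inner product, regroup by $\alpha$ and $\beta$, and close each group with the angle-difference identities for $\cos(A-B)$ and $\sin(A-B)$. The paper states this more tersely (``expand the dot product and apply the angle-difference identities''), but the content is identical.
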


\begin{proof}
Expand the dot product and apply the angle-difference identities:
\begin{align}
\cos(\omega(i-j))=\cos(\omega i)\cos(\omega j)+\sin(\omega i)\sin(\omega j),
\end{align}
\begin{align}
\sin(\omega(i-j))=\sin(\omega i)\cos(\omega j)-\cos(\omega i)\sin(\omega j).
\end{align}
The claimed identity follows by direct substitution. \end{proof}

\subsection{Recency Priors and the ALiBi Term as a Maximum-Entropy Principle}
\label{sec:alibi_justification}

In causal attention, for a fixed query position $i$, the admissible keys are $j\le i$, hence the relevant structural variable is the lag $d=i-j\in\{0,1,\dots,i\}$. A recency prior is a distribution over lags. We now show that the exponential (geometric) family is uniquely singled out by a maximum-entropy principle \citep{jaynes1957information} with a single moment constraint, and that its log-prior is equivalent to an ALiBi-style linear bias.

Fix a maximum context length $L\ge 1$ and consider lags $d\in\{0,1,\dots,L-1\}$. Let $\Delta_{L}$ denote the simplex
\begin{align}
\Delta_{L}=\left\{q\in\mathbb{R}^{L}:\ q_d\ge 0,\ \sum_{d=0}^{L-1} q_d=1\right\}.
\end{align}
Fix a target mean lag $\mu\in(0,L-1)$. Consider the constrained set
\begin{align}
\mathcal{Q}_{\mu}
=
\left\{q\in\Delta_{L}:\ \sum_{d=0}^{L-1} d\,q_d=\mu\right\}.
\end{align}
Define the Shannon entropy $H(q)=-\sum_{d=0}^{L-1} q_d\log q_d$ with the convention $0\log 0=0$.

\begin{theorem}[Unique maximum-entropy recency prior under a mean constraint]
\label{thm:maxent_geometric_truncated}
For each $\mu\in(0,L-1)$, the optimization problem
\begin{align}
\max_{q\in\mathcal{Q}_{\mu}} H(q)
\end{align}
has a unique maximizer. This maximizer has the exponential form
\begin{equation}
q_d^\star=\frac{e^{-\lambda d}}{\sum_{t=0}^{L-1} e^{-\lambda t}}
\quad\text{for }d=0,1,\dots,L-1,
\label{eq:truncated_geometric}
\end{equation}
where $\lambda\in\mathbb{R}$ is the unique value such that $\sum_{d=0}^{L-1} d\,q_d^\star=\mu$.
\end{theorem}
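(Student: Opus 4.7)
The plan is to set this up as a standard concave maximization on a compact convex set and then extract the geometric form from the first-order conditions, closing the argument by showing that the tilting parameter $\lambda$ is pinned down uniquely by the mean constraint.

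First, I would establish existence and uniqueness of the maximizer on general grounds. The feasible set $\mathcal{Q}_\mu$ is the intersection of the simplex $\Delta_L$ with the affine hyperplane $\sum_d d\,q_d=\mu$, so it is closed, bounded, and convex. It is nonempty for $\mu\in(0,L-1)$ because the convex combination $q=(1-t)\delta_0+t\,\delta_{L-1}$ with $t=\mu/(L-1)$ lies in $\mathcal{Q}_\mu$. The Shannon entropy $H$ is continuous on $\Delta_L$ and strictly concave on its interior (since $x\mapsto -x\log x$ is strictly concave on $(0,1]$), so a maximizer exists by compactness and is unique provided any maximizer lies in the relative interior of $\Delta_L$.

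Second, I would rule out boundary maximizers before invoking Lagrangian calculus. If some $q_d^\star=0$, one can choose two indices $d_1\ne d_2$ in the support of $q^\star$ and one coordinate $d_0$ with $q_{d_0}^\star=0$, and perturb $(q_{d_0},q_{d_1},q_{d_2})$ infinitesimally in a direction that preserves both $\sum q_d$ and $\sum d\,q_d$, using the three degrees of freedom to solve two linear constraints while moving $q_{d_0}$ off zero. Because $-x\log x$ has derivative $+\infty$ at zero, this perturbation strictly increases $H$, contradicting optimality. This is possible precisely because $\mu\in(0,L-1)$, so the support of $q^\star$ contains at least two distinct lags. With the maximizer in the interior, the Lagrangian
\begin{align}
\mathcal{L}(q,\lambda,\nu)
=-\sum_{d=0}^{L-1} q_d\log q_d-\lambda\bigg(\sum_{d=0}^{L-1} d\,q_d-\mu\bigg)-\nu\bigg(\sum_{d=0}^{L-1} q_d-1\bigg)
\end{align}
yields the stationarity condition $-\log q_d-1-\lambda d-\nu=0$, which gives $q_d^\star\propto e^{-\lambda d}$; normalization then produces exactly \eqref{eq:truncated_geometric}.

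Third, I would show that the mean map $\mu(\lambda)=\sum_d d\,e^{-\lambda d}/\sum_d e^{-\lambda d}$ is a continuous bijection from $\mathbb{R}$ onto $(0,L-1)$, which pins down $\lambda$ uniquely. Continuity is immediate. Differentiating and recognizing the result as the variance of $d$ under the tilted distribution gives
\begin{align}
\mu'(\lambda)=-\mathrm{Var}_{q^\star(\lambda)}[d]<0,
\end{align}
where strict negativity follows from $L\ge 2$ and the fact that $q^\star(\lambda)$ has full support on $\{0,\dots,L-1\}$, so $d$ is non-degenerate. Hence $\mu(\lambda)$ is strictly decreasing. For the limits, as $\lambda\to+\infty$ the mass concentrates at $d=0$, so $\mu(\lambda)\to 0$; as $\lambda\to-\infty$ the mass concentrates at $d=L-1$, so $\mu(\lambda)\to L-1$. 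The intermediate value theorem combined with strict monotonicity yields a unique $\lambda\in\mathbb{R}$ realizing any prescribed $\mu\in(0,L-1)$.

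The main obstacle I expect is the boundary exclusion step: one has to justify carefully that the maximizer lies in the interior of $\Delta_L$ before applying the unconstrained-style Lagrangian stationarity condition, since naively differentiating $-x\log x$ at $x=0$ is illegitimate. Once the interior condition is secured via the perturbation argument, the remainder is a standard exponential-family calculation, and the monotonicity of $\mu(\lambda)$ via the variance identity closes the uniqueness of $\lambda$ without further subtlety.
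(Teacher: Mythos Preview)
Your proposal is correct and follows essentially the same route as the paper: compactness/convexity of $\mathcal{Q}_\mu$ plus strict concavity of $H$ for existence and uniqueness, a boundary-exclusion perturbation exploiting the infinite slope of $-x\log x$ at zero, Lagrangian stationarity yielding $q_d^\star\propto e^{-\lambda d}$, and then the variance identity $\mu'(\lambda)=-\mathrm{Var}_{q^\star(\lambda)}[d]<0$ together with the limits at $\lambda\to\pm\infty$ to pin down $\lambda$ uniquely. Your version is slightly more explicit in a couple of places (the nonemptiness witness, using three indices in the perturbation), but the structure and all key ingredients match the paper's proof.
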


\begin{proof}
First note that $\mathcal{Q}_{\mu}$ is a nonempty compact convex subset of $\mathbb{R}^{L}$ because it is the intersection of the compact simplex $\Delta_L$ with an affine hyperplane. The entropy function $H$ is continuous on $\Delta_L$ and strictly concave on the relative interior of $\Delta_L$. Therefore $H$ attains its maximum on $\mathcal{Q}_{\mu}$, and any maximizer is unique because strict concavity implies that if $q^{(1)}\neq q^{(2)}$ are both maximizers, then 
\begin{align}
H\big(\tfrac12(q^{(1)}+q^{(2)})\big)>\tfrac12(H(q^{(1)})+H(q^{(2)})), 
\end{align}
contradicting maximality.

We now identify the maximizer by the method of Lagrange multipliers and verify that it lies in the interior. Let $q^\star$ be the unique maximizer. Suppose for contradiction that $q^\star_{d_0}=0$ for some $d_0$. Because $\mu\in(0,L-1)$, the constraint $\sum d q_d=\mu$ forces positive mass on at least one lag $d_1\neq d_0$. Consider a perturbation that transfers an infinitesimal amount of mass from $d_1$ to $d_0$ while preserving both constraints. One can choose such a perturbation within $\mathcal{Q}_\mu$ because the constraints are linear and $q^\star$ lies on a face of the simplex with at least two active coordinates. The directional derivative of $H$ in such a direction is $+\infty$ at a zero coordinate since $\partial_x(-x\log x) = -\log x -1$ diverges as $x\rightarrow 0$, which contradicts optimality. Therefore $q^\star_d>0$ for all $d$.

Since $q^\star$ lies in the interior, the KKT conditions reduce to stationarity of the Lagrangian. Introduce multipliers $\alpha,\beta\in\mathbb{R}$ for the normalization and mean constraints and consider
\begin{align}
\mathcal{L}(q,\alpha,\beta)
=
-\sum_{d=0}^{L-1} q_d\log q_d
+\alpha\left(\sum_{d=0}^{L-1} q_d-1\right)
+\beta\left(\sum_{d=0}^{L-1} d\,q_d-\mu\right).
\end{align}
Stationarity with respect to $q_d$ gives, for each $d$,
\begin{align}
\frac{\partial \mathcal{L}}{\partial q_d}
=
-(\log q_d+1)+\alpha+\beta d
=
0,
\end{align}
hence
\begin{align}
\log q_d = \alpha-1+\beta d
\quad\Longrightarrow\quad
q_d = e^{\alpha-1} e^{\beta d}.
\end{align}
Let $C=e^{\alpha-1}>0$ and set $\lambda=-\beta$. Then $q_d=C e^{-\lambda d}$. Enforcing $\sum_{d=0}^{L-1} q_d=1$ yields
\begin{align}
C^{-1}=\sum_{t=0}^{L-1} e^{-\lambda t},
\end{align}
so $q^\star$ has the form \eqref{eq:truncated_geometric}. It remains to show existence and uniqueness of $\lambda$ satisfying the mean constraint. Define
\begin{align}
Z(\lambda)=\sum_{t=0}^{L-1} e^{-\lambda t},\qquad
m(\lambda)=\sum_{d=0}^{L-1} d\,\frac{e^{-\lambda d}}{Z(\lambda)}.
\end{align}
Then $m(\lambda)$ is continuous and strictly decreasing in $\lambda$. To see strict monotonicity, note that $m(\lambda)$ is the expectation of $d$ under an exponential family with natural parameter $-\lambda$, and one has
\begin{align}
m'(\lambda) = -\mathrm{Var}_{q^\star(\lambda)}(d) < 0,
\end{align}
since the variance is positive for any distribution with full support on $\{0,\dots,L-1\}$ and $L\ge 2$. Moreover, $\lim_{\lambda\to +\infty} m(\lambda)=0$ and $\lim_{\lambda\to -\infty} m(\lambda)=L-1$. Therefore for every $\mu\in(0,L-1)$ there exists a unique $\lambda$ with $m(\lambda)=\mu$.
\end{proof}

The relevance to attention is that \eqref{eq:truncated_geometric} implies that the optimal log-prior over lags is affine in the lag:
\begin{align}
\log q_d^\star = -\lambda d - \log Z(\lambda).
\end{align}
Under causal masking, $d=i-j\ge 0$, so for fixed $i$,
\begin{align}
\log q_{i-j}^\star = -\lambda(i-j) - \log Z(\lambda) = \lambda j + c_i,
\end{align}
where $c_i=-\lambda i - \log Z(\lambda)$ is constant across admissible keys $j\le i$.

\begin{proposition}[Equivalence of lag-linear and key-linear biases under causal masking]
\label{prop:alibi_equivalence}
Fix a query index $i$ and consider keys $j\le i$. Let $m\in\mathbb{R}$. Define two bias functions
\begin{align}
\mathcal{B}^{\mathrm{lag}}_{ij}\triangleq -m(i-j),
\qquad
\mathcal{B}^{\mathrm{key}}_{ij}\triangleq m j.
\end{align}
Then $\mathcal{B}^{\mathrm{lag}}_{ij}=\mathcal{B}^{\mathrm{key}}_{ij}-mi$ for all $j\le i$. Consequently, for any content logits $s_{ij}$,
\begin{align}
\softmax_{j\le i}\big(s_{ij}+\mathcal{B}^{\mathrm{lag}}_{ij}\big)
=
\softmax_{j\le i}\big(s_{ij}+\mathcal{B}^{\mathrm{key}}_{ij}\big).
\end{align}
\end{proposition}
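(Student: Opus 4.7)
The proof reduces to two elementary observations: an algebraic identity between the two biases and the shift-invariance of the softmax. The plan is to carry these out cleanly, in that order, since no nontrivial analysis is involved.

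First, I would verify the stated identity by direct substitution: compute
\begin{align}
\mathcal{B}^{\mathrm{key}}_{ij} - mi = mj - mi = -m(i-j) = \mathcal{B}^{\mathrm{lag}}_{ij},
\end{align}
which holds for every integer pair $(i,j)$, not merely those with $j \le i$ (the causal restriction plays no role in this algebraic step; it only fixes which indices enter the softmax in the next step). The key structural point to emphasize is that the discrepancy between the two biases is $-mi$, a quantity that depends on the query index $i$ but is constant across all admissible keys $j$.

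Second, I would promote this pointwise identity to an identity of softmax distributions. Adding $s_{ij}$ to both sides gives
\begin{align}
s_{ij} + \mathcal{B}^{\mathrm{lag}}_{ij} = \big(s_{ij} + \mathcal{B}^{\mathrm{key}}_{ij}\big) - mi,
\end{align}
so the two logit vectors indexed by $j \in \{0,1,\dots,i\}$ differ only by the key-independent scalar $-mi$. I would then invoke the shift-invariance of the softmax: for any vector $x \in \mathbb{R}^n$ and any constant $c \in \mathbb{R}$, the factor $e^{c}$ appears identically in the numerator and denominator of each coordinate of $\softmax(x+c\mathbf{1})$, and hence cancels. Applying this with $c = -mi$ to the masked logit vector over $\{j : j \le i\}$ yields the claimed equality of the two softmax distributions.

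There is no genuine obstacle here; the only care needed is notational. I would state explicitly that the softmax is taken over the causal set $\{j \le i\}$ so that the reader sees the normalization is the same on both sides, and that the constant $-mi$ is pulled out of the exponential before the ratio is formed. If desired, I would append a one-line remark that the same argument shows any two biases whose difference depends only on $i$ (and not on $j$) are softmax-equivalent, which is the substantive content underlying the equivalence between lag-linear and key-linear parameterizations of ALiBi.
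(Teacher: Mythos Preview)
Your proposal is correct and follows essentially the same two-step structure as the paper's proof: first the immediate algebraic identity $\mathcal{B}^{\mathrm{lag}}_{ij}=mj-mi=\mathcal{B}^{\mathrm{key}}_{ij}-mi$, then an appeal to the row-wise shift-invariance $\softmax(x+c\mathbf{1})=\softmax(x)$ with $c=-mi$. The paper's version is terser but the argument is identical.
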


\begin{proof}
The identity 
\begin{align}
\mathcal{B}^{\mathrm{lag}}_{ij}=-m(i-j)=mj-mi 
\end{align}
is immediate. Softmax is invariant to adding a constant to all logits in a fixed row: for any vector $x$ and scalar $c$, 
\begin{align}
\softmax(x+c\mathbf{1})=\softmax(x).
\end{align}
Apply this with $c=-mi$ to the row indexed by $i$. \end{proof}

\Cref{thm:maxent_geometric_truncated,prop:alibi_equivalence} provide a justification for the ALiBi-style term used in our implementation. Among all recency priors on lags with a fixed mean, the least-committal prior is exponential in the lag, and under causal masking this is equivalent to a key-linear bias.

\subsection{Key-Only Defaults and Attention Sinks as Minimal Rank}
\label{sec:sink_minimality}

The EOT perspective in \Cref{sec:eot_sinks} shows that when content evidence is weak, the posterior collapses toward the prior. For stability in that regime it is useful to allow the prior to place substantial mass on a small set of default keys. We now show that the most direct way to express a query-independent default preference is a key-only log-prior $u(j)$, and that this mechanism is minimal in a precise rank sense.

Fix a finite length $L$ and identify indices with $\{0,1,\dots,L-1\}$. Let $u\in\mathbb{R}^{L}$ and define the matrix $U\in\mathbb{R}^{L\times L}$ by
\begin{align}
U_{ij}\triangleq u_j.
\end{align}

\begin{theorem}[Key-only priors are exactly rank-one and require one positional lane]
\label{thm:key_only_rank_one}
The matrix $U$ satisfies $\mathrm{rank}(U)\le 1$. If $u\neq 0$, then $\mathrm{rank}(U)=1$. Moreover, $U$ admits an SDPA-compatible representation with $d_p=2$ by taking $\varphi(i)\equiv [1, 0]^\top$ and $\psi(j)=[u_j, 0]^\top$. Conversely, if a matrix $M\in\mathbb{R}^{L\times L}$ has identical rows, meaning that $M_{ij}$ depends only on $j$, then $M$ is of the form $U$ for some $u$ and has rank at most one.
\end{theorem}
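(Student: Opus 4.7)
The plan is to verify each of the three claims by direct algebraic manipulation, since this theorem is essentially a structural observation rather than a deep result. The core observation is that a matrix whose entries depend only on the column index is an outer product of the all-ones vector with a key-indexed vector, which immediately pins down its rank and yields a trivial bilinear factorization.

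First I would address the rank bound. Let $\mathbf{1}\in\mathbb{R}^{L}$ denote the all-ones vector. By the definition $U_{ij}=u_j$, we can write $U=\mathbf{1}\,u^\top$, which is an outer product. Outer products have rank at most one, and rank exactly one when both factors are nonzero. Since $\mathbf{1}\neq 0$ by construction, the rank equals one whenever $u\neq 0$, and equals zero when $u=0$.

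Next I would check the explicit SDPA realization. With $\varphi(i)=[1,0]^\top$ and $\psi(j)=[u_j,0]^\top$, a direct computation gives
\begin{align}
\langle \varphi(i),\psi(j)\rangle = 1\cdot u_j + 0\cdot 0 = u_j = U_{ij},
\end{align}
for every $i,j$, matching \Cref{def:sdpa_compatible} with $d_p=2$. The fact that $\varphi$ is constant in $i$ is precisely the mathematical signature of a key-only bias: the augmentation of the query vector reduces to a broadcast scalar, as already noted in the sink construction of \Cref{sec:goat_parameterization}.

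Finally I would handle the converse. Suppose $M\in\mathbb{R}^{L\times L}$ satisfies $M_{ij}=f(j)$ for some function $f:\{0,\dots,L-1\}\to\mathbb{R}$, so the rows of $M$ are identical. Setting $u_j\triangleq f(j)$ gives $M_{ij}=u_j$, which is exactly the form of $U$, and the rank bound from the first part applies verbatim. The hardest step is conceptually none of these, but if anything it is making sure the converse direction is phrased symmetrically: the assumption ``depends only on $j$'' must be read as the existence of a function of $j$ alone, after which the identification $u_j=f(j)$ closes the loop without further work.
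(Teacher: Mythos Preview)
Your proposal is correct and follows essentially the same argument as the paper: write $U=\mathbf{1}\,u^\top$ to get the rank bound, verify the bilinear factorization by direct computation, and for the converse identify $u_j=f(j)$. The only cosmetic difference is that the paper's proof notes in passing that $d_p=1$ already suffices, whereas you verify the padded $d_p=2$ representation exactly as stated in the theorem; both are trivially correct.
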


\begin{proof}
Let $\mathbf{1}\in\mathbb{R}^{L}$ be the all-ones vector. Then $U=\mathbf{1}u^\top$, which is an outer product, hence $\mathrm{rank}(U)\le 1$. If $u\neq 0$, then $\mathbf{1}u^\top$ is a nonzero outer product, hence has rank exactly one. The SDPA-compatible representation with $d_p=1$ is immediate from $U_{ij}=1\cdot u_j=\langle \varphi(i),\psi(j)\rangle$ with $\varphi(i)\equiv 1$ and $\psi(j)=u_j$. For the converse, if $M_{ij}=m(j)$ for some function $m$, then $M=\mathbf{1}m^\top$, hence rank at most one, and it equals $U$ with $u=m$. \end{proof}

\subsection{Synthesis: Optimality Within the Admissible Class}
\label{sec:synthesis_optimality}

We now summarize what has been proved. The EOT derivation fixes the form of the posterior as a softmax of content logits plus an additive log-prior. The requirement that this prior be implemented inside a single SDPA call forces a bilinear form in positional features, as in \Cref{def:sdpa_compatible}. If we further require that the relative component be translation equivariant and stable under extrapolation, then \Cref{thm:finite_trig_classification} implies that the only possible bounded relative priors in this SDPA-compatible class are finite trigonometric polynomials in the displacement. \Cref{prop:mode_factorization} shows that our query rotation construction is an exact bilinear realization of the most general such Fourier mode, including the antisymmetric component that encodes directionality.

For causal language models, a recency bias can be justified independently by a maximum-entropy principle. \Cref{thm:maxent_geometric_truncated} shows that among all recency priors with a fixed mean lag, the unique entropy maximizer is exponential in the lag. \Cref{prop:alibi_equivalence} then shows that, under causal masking, this yields an ALiBi-equivalent key-linear bias. Finally, \Cref{thm:key_only_rank_one} shows that query-independent defaults, including attention sinks, correspond exactly to rank-one key-only log-priors and require exactly one additional positional lane. This is the minimal SDPA-compatible mechanism for controlling sink behavior without entangling content representations.

These results justify our parameterization as optimal in the following sense: under the constraints that are forced by our goals, namely SDPA compatibility, translation equivariance of the relative component, and boundedness for stability, the admissible family of relative priors is exactly the finite trigonometric class, and our construction realizes the general element of that class with minimal per-frequency dimension. Under an information-theoretic principle for recency in causal attention, the unique least-committal recency prior is exponential in lag, which is ALiBi-equivalent, and our key-linear slope term is therefore a principled component of the log-prior. Under the requirement of query-independent defaults, the key-only sink term is the unique minimal-rank mechanism.

\section{C4 Language Modeling Setup}

\begin{table}[H]
\centering
\caption{\textbf{C4-125M Training Setup.}}
\label{tab:c4_train_setup}
\small
\renewcommand{\arraystretch}{1.2}
\rowcolors{2}{gray!6}{white}
\begin{tabularx}{\textwidth}{@{} l X @{}}
\toprule
\textbf{Hyperparameter} & \textbf{Value} \\
\midrule
Dataset & C4 (English), \texttt{allenai/c4} (streaming), \texttt{train}/\texttt{validation} splits \\
Tokenizer & \texttt{gpt2} (HF fast)\\
Context Length & $L_{\text{train}} = 2048$ \\
\textsc{Goat} Configuration & $d_{\text{pos}} = 12$ (Pos Rank 2, Abs Rank 8), Key Bias enabled \\
Training Budget & $4.0 \times 10^{9}$ tokens \\
Architecture & GPT-style decoder-only Transformer (pre-norm) \\
Model Size & 12 layers, 12 heads, $d_{\text{model}}=768$ ($\approx$125M parameters) \\
Optimizer & AdamW ($\beta_1=0.9, \beta_2=0.95, \epsilon=10^{-8}$) \\
Learning Rate & Peak $1\times 10^{-4}$, cosine decay to $3\times 10^{-5}$ \\
Warmup & 2000 steps \\
Weight Decay & 0.1 \\
Batch Size & 32 \\
Precision & \texttt{bf16} mixed precision \\
Gradient Clipping & 1.0 \\
\bottomrule
\end{tabularx}
\end{table}

\section{ImageNet Training Configuration}

\begin{table}[H]
\centering
\caption{\textbf{ImageNet-1k Training Setup.}}
\label{tab:imagenet_config}
\small
\renewcommand{\arraystretch}{1.2}
\rowcolors{2}{gray!6}{white}
\begin{tabularx}{\textwidth}{@{} l X @{}}
\toprule
\textbf{Hyperparameter} & \textbf{Value} \\
\midrule
Dataset & ImageNet-1k (ILSVRC 2012) \\
Architecture & ViT-Small (SwiGLU variant) \\
Model Size & 12 layers, 6 heads, $d_{\text{model}}=384$, patch $16{\times}16$ \\
\textsc{Goat} Configuration & $d_{\text{pos}} = 32$ (16 pairs), 2D factorization, Key Bias enabled \\
Input Resolution & $224 \times 224$ (training) \\
Optimizer & AdamW ($\beta_1=0.9, \beta_2=0.999$) \\
Learning Rate & Peak $1 \times 10^{-3}$, cosine decay \\
Weight Decay & 0.05 \\
Training Schedule & 150 epochs \\
Warmup & 20 epochs \\
Batch Size & 1024 \\
Augmentation & RandAugment ($N=2$, $M=9$) \\
Mixup / CutMix & Mixup ($\alpha=0.8$), CutMix ($\alpha=1.0$) \\
Regularization & Label Smoothing ($\epsilon=0.1$), Random Erasing ($p=0.25$) \\
Stochastic Depth & Drop Path rate $0.1$ (linear decay) \\
Precision & \texttt{float16} mixed precision \\
\bottomrule
\end{tabularx}
\end{table}

\end{document}